  \newdimen\paravsp  \paravsp=1.3ex % named paragraph spacing
\newtheorem{theorem}{Theorem}
\newtheorem{lemma}[theorem]{Lemma}
\newtheorem{definition}[theorem]{Definition}
\newtheorem{proposition}[theorem]{Proposition}
\newenvironment{keywords}{\centerline{\bf\small
    Keywords}\begin{quote}\small}{\par\end{quote}\vskip 1ex}
\newenvironment{proof}{{\noindent\bf Proof.}}{\vskip 1ex}
\def\paradot#1{\vspace{\paravsp plus 0.5\paravsp minus 0.5\paravsp}\noindent{\bf\boldmath{#1.}}} % boldface paragraph.
\newcommand{\sref}{Section~\ref}
\newcommand{\appref}[1]{Appendix~\ref{#1}}
\newcommand{\aref}[1]{Algorithm~\ref{#1}}
\newcommand{\fref}[1]{Figure~\ref{#1}}
\newcommand{\tref}[1]{Table~\ref{#1}}
\newcommand{\realSet}{\ensuremath{\mathbb{R}}\xspace}
\newcommand{\expectation}{\ensuremath{\mathbb{E}}\xspace}
\DeclareMathOperator*{\argmax}{argmax}
\newcommand{\prob}{\ensuremath{Pr}\xspace}
\newcounter{comment}
\newcommand{\agentSet}{\ensuremath{\mathcal{I}}\xspace}
\newcommand{\stSpace}{\ensuremath{\mathcal{S}}\xspace}
\newcommand{\st}{\ensuremath{s}\xspace}
\newcommand{\stp}{\ensuremath{s'}\xspace}
\newcommand{\actSpace}{\ensuremath{\mathcal{A}}\xspace}
\newcommand{\actSet}[1]{\ensuremath{\actSpace_{#1}}\xspace}
\newcommand{\actSpaceJoint}{\ensuremath{\vec{\actSpace}}\xspace}
\newcommand{\act}{\ensuremath{a}\xspace}
\newcommand{\actSub}[1]{\ensuremath{\act_{#1}}\xspace}
\newcommand{\actSubt}[2][t]{\ensuremath{\act_{#2,#1}}\xspace}
\newcommand{\actJoint}{\ensuremath{\vec{\act}}\xspace}
\newcommand{\obsSpace}{\ensuremath{\mathcal{O}}\xspace}
\newcommand{\obsSet}[1]{\ensuremath{\obsSpace_{#1}}\xspace}
\newcommand{\obs}{\ensuremath{o}\xspace}
\newcommand{\obsSub}[1]{\ensuremath{\obs_{#1}}\xspace}
\newcommand{\obsSubt}[2][t]{\ensuremath{\obs_{#2,#1}}\xspace}
\newcommand{\obsJoint}{\ensuremath{\vec{\obs}}\xspace}
\newcommand{\transF}{\ensuremath{\mathcal{T}}\xspace}
\newcommand{\obsF}{\ensuremath{\mathcal{Z}}\xspace}
\newcommand{\obsFSub}[1]{\ensuremath{\obsF_{#1}}\xspace}
\newcommand{\rewF}{\ensuremath{\mathcal{R}}\xspace}
\newcommand{\rewFSub}[1]{\ensuremath{\rewF_{#1}}\xspace}
\newcommand{\rew}{\ensuremath{r}\xspace}
\newcommand{\rewSub}[1]{\ensuremath{\rew_{#1}}\xspace}
\newcommand{\rewSubt}[2][t]{\ensuremath{\rew_{#2,#1}}\xspace}
\newcommand{\rewVec}{\ensuremath{\vec{\rew}}\xspace}
\newcommand{\ret}{\ensuremath{G}\xspace}
\newcommand{\retSubt}[2][t]{\ensuremath{\ret_{#2,#1}}\xspace}
\newcommand{\disc}{\ensuremath{\gamma}\xspace}
\newcommand{\hist}{\ensuremath{h}\xspace}
\newcommand{\histJoint}{\ensuremath{\vec{\hist}}\xspace}
\newcommand{\histSpace}{\ensuremath{\mathcal{H}}\xspace}
\newcommand{\histSub}[1]{\ensuremath{\hist_{#1}}\xspace}
\newcommand{\histSubt}[2][t]{\ensuremath{\histSub{#2,#1}}\xspace}
\newcommand{\histSet}[1]{\ensuremath{\histSpace_{#1}}\xspace}
\newcommand{\histSett}[2][t]{\ensuremath{\histSpace_{#2,#1}}\xspace}
\newcommand{\bel}{\ensuremath{b}\xspace}
\newcommand{\belt}[1][t]{\ensuremath{\bel_{#1}}\xspace}
\newcommand{\belSub}[1]{\ensuremath{\bel_{#1}}\xspace}
\newcommand{\belInit}{\ensuremath{\belt[0]}\xspace}
\newcommand{\posgTuple}{\ensuremath{\langle \agentSet, \stSpace, \belInit, \actSpaceJoint, \{ \obsSet{i} \}, \transF, \{ \obsFSub{i} \}, \{ \rewFSub{i} \} \rangle}\xspace}
\newcommand{\histSt}{\ensuremath{w}\xspace}
\newcommand{\histStSpace}{\ensuremath{\mathcal{W}}\xspace}
\newcommand{\pol}{\ensuremath{\pi}\xspace}
\newcommand{\polSub}[1]{\ensuremath{\pi_{#1}}\xspace}
\newcommand{\gen}{\ensuremath{\mathcal{G}}\xspace}
\newcommand{\algname}{Partially Observable Type-based Meta Monte-Carlo Planning\xspace}
\newcommand{\algacronym}{POTMMCP\xspace}
\newcommand{\hpsname}{history-policy-state\xspace}
\newcommand{\hpsnames}{history-policy-states\xspace}
\newcommand{\resultfigheight}{0.22\textheight}
\newcommand{\envfigheight}{0.175\textheight}
\begin{document}
%%%%%%%%%%%%%%%%%%%%%%%%%%%%%%%%%%%%%%%%%%%%%%%%%%%%%%%%%%%%%%%
%%                    T i t l e - P a g e                    %%
%%%%%%%%%%%%%%%%%%%%%%%%%%%%%%%%%%%%%%%%%%%%%%%%%%%%%%%%%%%%%%%

\title{
    \vspace{-4ex}
    \vskip 2mm\bf\Large\hrule height5pt \vskip 4mm
    Combining a Meta-Policy and Monte-Carlo Planning for Scalable Type-Based Reasoning in Partially Observable Environments
    \vskip 4mm \hrule height2pt
}
\author{
    Jonathon~Schwartz$^{1}$\thanks{Corresponding author: Jonathon.schwartz@anu.edu.au}, 
    Hanna~Kurniawati$^{1}$, 
    Marcus~Hutter$^{1,2}$\\[3mm]
    \begin{minipage}{0.33\textwidth}
        \centering  
        \normalsize $^{1}$Australian National University \\
        \normalsize Canberra Australia
    \end{minipage}
    \begin{minipage}{0.33\textwidth}
        \centering
        \normalsize $^{2}$Google DeepMind \\
        \normalsize London, United Kingdom
    \end{minipage}
}

\date{} % Final date added as timestamp by arXiv
\maketitle

\begin{abstract}
    The design of autonomous agents that can interact effectively with other agents without prior coordination is a core problem in multi-agent systems. Type-based reasoning methods achieve this by maintaining a belief over a set of potential behaviours for the other agents. However, current methods are limited in that they assume full observability of the state and actions of the other agent or do not scale efficiently to larger problems with longer planning horizons. Addressing these limitations, we propose \algname (\algacronym) ---an online Monte-Carlo Tree Search based planning method for type-based reasoning in large partially observable environments. \algacronym incorporates a novel meta-policy for guiding search and evaluating beliefs, allowing it to search more effectively to longer horizons using less planning time. We show that our method converges to the optimal solution in the limit and empirically demonstrate that it effectively adapts online to diverse sets of other agents across a range of environments. Comparisons with the state-of-the art method on problems with up to $10^{14}$ states and $10^8$ observations indicate that \algacronym is able to compute better solutions significantly faster\footnote{This is an updated and expanded version of the paper that appeared in the proceeding of the AAMAS 2023 conference \cite{schwartzBayesAdaptiveMonteCarloPlanning2023}}.
\end{abstract}

\vspace{2ex}
\begin{keywords}
    \vspace{-2ex}
    \small
    Multi-Agent, POSG, Type-Based Reasoning, Planning under Uncertainty, MCTS
\end{keywords}

\section{Introduction}\label{sec:Intro}
A core research area in multi-agent systems is the development of autonomous agents that can interact effectively with other agents without prior coordination \citep{bowlingCoordinationAdaptationImpromptu2005,stoneAdHocAutonomous2010,albrechtSpecialIssueMultiagent2017}. Type-based reasoning methods give agents this ability by maintaining a belief over a set \textit{types} for the other agents \citep{barrettEmpiricalEvaluationAd2011,albrechtConvergenceOptimalityBestresponse2014,barrettCooperatingUnknownTeammates2015,albrechtBeliefTruthHypothesised2016}. Each type is a mapping from the agent’s interaction history to a probability distribution over actions, and specifies the agent’s behaviour. If the set of types is sufficiently representative, type-based reasoning methods can lead to fast adaptation and effective interaction without prior coordination  \citep{albrechtGametheoreticModelBestresponse2013,barrettCooperatingUnknownTeammates2015}.

Unfortunately, type-based reasoning significantly increases the size and complexity of the planning problem and finding scalable and efficient solution methods remains a key challenge. This is especially true in partially observable settings where the planning agent is unable to observe the type of the other agent, their interaction history, or the state of the environment. In this setting the agent must maintain a joint belief over these three features leading to a belief space that grows exponentially with the planning horizon and number of agents. Several online planning methods based on Monte-Carlo Tree Search (MCTS) have shown promising performance in non-trivial partially observable problems \citep{kakarlapudiDecisionTheoreticPlanningCommunication2022, schwartz2022intmcp}. However, so far these methods have only been demonstrated in settings where the other agent's type is known and scale poorly to domains with longer planning horizons.

Inspired by the success of techniques combining MCTS with a search policy in single-agent \citep{schrittwieserMasteringAtariGo2020} and zero-sum \citep{silverGeneralReinforcementLearning2018} settings, in this paper we propose a method for integrating a search policy into planning in the type-based, partially-observable setting. The use of a search policy offers a number of advantages. Firstly, it guides exploration, biasing it away from low value actions and allowing the agent to plan effectively for longer horizons. Secondly, if the search policy has a value function this can be used for evaluation during search. Doing this avoids expensive Monte-Carlo (MC) rollouts and can significantly improve search efficiency.  

To alleviate the disadvantages that come with using a search policy, we propose a novel meta-policy using the set of types available to the planning agent. The meta-policy is generated using an empirical game \citep{wellmanMethodsEmpiricalGametheoretic2006} which computes the expected payoffs between each pairing of types using a number of simulated episodes. This makes the meta-policy relatively inexpensive to compute and side-steps the usual method for finding a search policy which is to train one from scratch \citep{silverGeneralReinforcementLearning2018,brownCombiningDeepReinforcement2020,timbersApproximateExploitabilityLearning2022a,liCombiningTreeSearchGenerative2023}.

Combining the meta-policy with MCTS, we create a new online planning algorithm for type-based reasoning in partially observable environments, which we refer to as \algname (\algacronym). Through extensive evaluations and ablations on large competitive, cooperative, and mixed partially observable environments - the largest of which has four agents and on the order of $10^{14}$ states and $10^{8}$ observations - we demonstrate empirically that \algacronym is able to substantially outperform the existing state-of-the-art method \citep{kakarlapudiDecisionTheoreticPlanningCommunication2022} in terms of final performance and planning time. Additionally, we prove the correctness of our approach, showing that \algacronym converges to the Bayes-optimal policy in the limit.

\section{Related Work}\label{sec:RelatedWork}
\paradot{Monte-Carlo Planning} We are interested in MC planning methods for environments where the agent must adapt to a set of possible types of other agents. When coordination between agents is involved, this is the \textit{ad-hoc teamwork} problem \citep{bowlingCoordinationAdaptationImpromptu2005,stoneAdHocAutonomous2010}. Various approaches to ad-hoc teamwork have been proposed, including those based on stage games \citep{wuOnlinePlanningAd2011}, Bayesian beliefs \citep{barrettEmpiricalEvaluationAd2011}, the Partially Observable Markov Decision Process (POMDP) \citep{barrettCommunicatingUnknownTeammates2014}, types with parameters \citep{albrechtReasoningHypotheticalAgent2017}, and for the many agent setting \citep{yourdshahiLargeScaleAdhoc2018}. All these methods use MCTS but are limited to environments where the state and actions of the other agents are fully observed. In the \textit{agent modelling} setting \citep{albrechtAutonomousAgentsModelling2018}, several MCTS-based methods have been proposed for the Interactive POMDP (I-POMDP) \citep{gmytrasiewiczFrameworkSequentialPlanning2005} framework. Including methods based on finite state-automata \citep{panellaInteractivePOMDPsFinitestate2017} and nested MCTS \citep{schwartz2022intmcp}, as well as methods for open multi-agent systems \citep{eckScalableDecisiontheoreticPlanning2020}, and systems with communication \citep{kakarlapudiDecisionTheoreticPlanningCommunication2022}. Other works have focused on planning in strictly cooperative \citep{czechowskiDecentralizedMCTSLearned2021,choudhuryScalableOnlinePlanning2022} or competitive \citep{cowlingInformationSetMonte2012a} settings. Also related to our work are a number of Bayes-adaptive planning methods using MCTS \citep{guezScalableEfficientBayesadaptive2013,amatoScalablePlanningLearning2014,kattLearningPOMDPsMonte2017}. However, these methods focus on learning parameters of the environment's transition dynamics, while we focus instead on learning the policy type and history of the other agent.

\paradot{Combining Reinforcement Learning and Search} A number of methods have been proposed that combine Reinforcement Learning (RL) with MCTS. Self-play RL and MCTS have been combined in two-player fully observable zero-sum games with a known environment model \citep{silverMasteringGameGo2016,silverGeneralReinforcementLearning2018} and using a learned model \citep{schrittwieserMasteringAtariGo2020}. Similar methods have been applied to zero-sum imperfect-information games \citep{brownSuperhumanAIMultiplayer2019,brownCombiningDeepReinforcement2020}, as well as cooperative games where there is prior coordination for decentralized execution \citep{lererImprovingPoliciesSearch2020}. 
Our method builds on this line of research, specifically relating to using an existing policy as a prior for search. However, we apply these advances outside of self-play zero-sum games or where there is prior coordination, instead focusing on online adaption to previously unknown other agents. There have also been works looking at combining MCTS with PUCT and RL for training a best-response policy to a single known policy \citep{timbersApproximateExploitabilityLearning2022a} or a distribution over policies \citep{liCombiningTreeSearchGenerative2023}. Compared with these methods, our method does not rely on any training to generate the search policy used by PUCT from scratch for a given policy set. Instead we propose an efficient method for utilizing the information available to the planning agent in the type-based reasoning setting to improve the planning without any training needed, even if the set of policies changes.

\section{Problem Description}\label{sec:ProblemDescription}
We consider the problem of \textit{type-based} reasoning in partially observable environments. We model the problem as a Partially Observable Stochastic Game (POSG) \citep{hansenDynamicProgrammingPartially2004} which consists of $N$ agents indexed $\mathcal{I} = \{1, \dots, N$\}, a discrete set of states $\stSpace$, an initial state distribution $\belInit \in \Delta(\stSpace)$, the joint action space $\actSpaceJoint = \actSet{1} \times \dots \times \actSet{N}$, the finite set of observations $\obsSet{i}$ for each agent $i \in \mathcal{I}$, a state transition function $\transF: \stSpace \times \actSpaceJoint \times \stSpace \rightarrow [0, 1]$ specifying the probability of transitioning to state $\stp$ given joint action $\actJoint$ was performed in state $\st$, an observation function for each agent $\obsFSub{i}: \stSpace \times \actSpaceJoint \times \obsSet{i} \rightarrow [0, 1]$ specifying the probability that performing joint action $\actJoint$ in state $\st$ results in observation $\obsSub{i}$ for agent $i$, and a bounded reward function for each agent $\rewFSub{i}: \stSpace \times \actSpaceJoint \rightarrow \realSet$. For convenience, we also define the generative model $\gen$, which combines $\transF, \obsF, \rewF$, and returns the next state, joint observation, and joint reward, given the current state and joint action $\langle \stp, \obsJoint, \rewVec \rangle \sim \gen(\st, \actJoint)$.

At each step, each agent $i \in \mathcal{I}$ simultaneously performs an action $\actSub{i} \in \actSet{i}$ from the current state $\st$ and receives an observation $\obsSub{i} \in \obsSet{i}$ and reward $\rewSub{i} \in \realSet$. Each agent has no direct access to the environment state or knowledge of the other agent's actions and observations. Instead they must rely only on information in their \textit{interaction-history} up to the current time step $t$: $\histSubt{i} = \langle \obsSubt[0]{i} \actSubt[0]{i} \obsSubt[1]{i} \actSubt[1]{i} \dots \actSubt[t-1]{i} \obsSubt[t]{i} \rangle$\footnote{For clarity the time subscript $t$ is omitted where it is clear from context.}. The set of all time $t$ histories for agent $i$ is denoted $\histSett{i}$. Agents select their next action using their \textit{policy} $\polSub{i}$ which is a mapping from their history $\histSubt{i}$ to a probability distribution over their actions, where $\polSub{i}(\actSub{i} | \histSubt{i})$ denotes the probability of agent $i$ performing action $\actSub{i}$ given history $\histSubt{i}$.  

We assume the other agents are using policies from a known fixed set of policies where each policy corresponds to an agent type. We denote the planning agent by $i$, and all other agents collectively using $-i$. The set of fixed policies for the other agents is $\Pi_{-i} = \{ \pi_{-i, m} | m = 1, \dots, M \}$, where $M$ is the number of policies in the set and $\pi_{-i} = \{ \pi_{j} | j \in \mathcal{I}\setminus\{i\}\}$ is a joint policy that assigns a policy for each non-planning agent. We denote the set of policies available for a specific agent as $\Pi_{j}$ for $j \in \mathcal{I}$. Furthermore, we assume the joint-policy used by the other agents is selected based on a known prior distribution $\rho$, where $\rho(\pi_{-i, m}) = \prob(\pi_{-i, m})$ \footnote{$\rho$ assigns prior probability to each joint-policy in $\Pi_{-i}$ which are not permutation-invariant in general. If $N > 2$ and agents are symmetric then there may be multiple equivalent joint-policies, each with a prior probability in $\rho$.}.  

The goal of the planning agent is to maximize its expected \textit{return} $\retSubt{i}$ with respect to $\rho$ within a single episode, $\retSubt{i} = \expectation_{\pi_{-i, m} \sim \rho} \left[ \sum_{k=t}^{\infty} \disc^{k-t} \rewSubt[k]{i} | \pi_{-i, m} \right]$, where $\disc \in [0, 1)$ is the discount. The Bayes-optimal policy is the policy that achieves the maximum possible expected return. 

\begin{figure}[t]
    \centering
    \includegraphics[width=\linewidth, height=0.15\textheight, keepaspectratio]{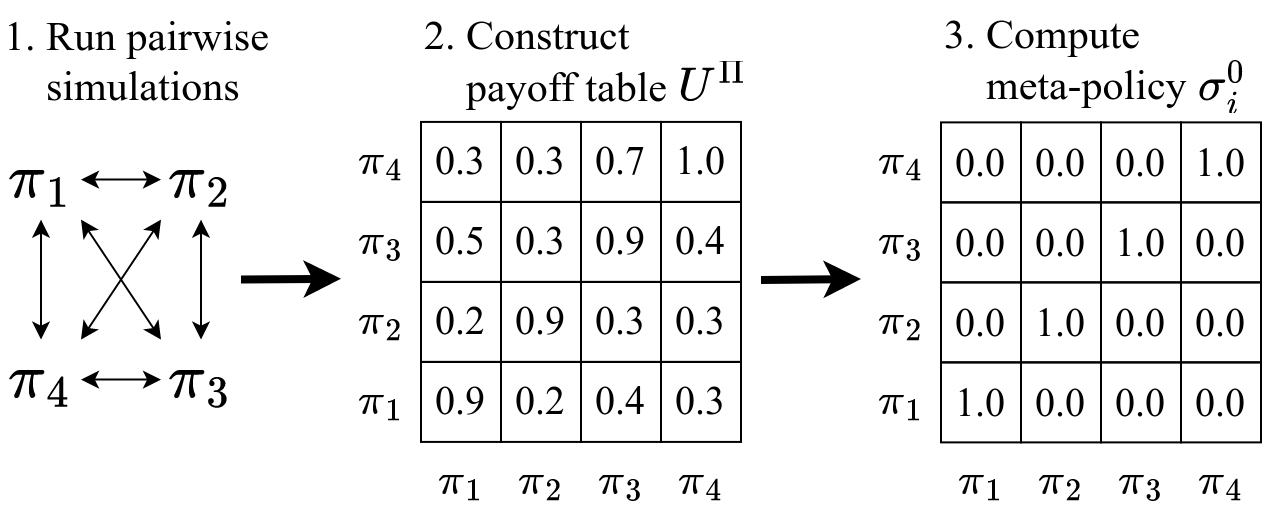}
    \caption{The greedy meta-policy $\sigma_{i}^{0}$ generation process for a symmetric environment with two agents.}
    \label{fig:meta_pi}
\end{figure}

\section{Method}\label{sec:Method}
Here we present \algacronym, an online MCTS-based planning algorithm for type-based reasoning in partially observable environments. Like existing planners  \citep{eckScalableDecisiontheoreticPlanning2020, kakarlapudiDecisionTheoreticPlanningCommunication2022, schwartz2022intmcp}, \algacronym uses MCTS to calculate the planning agent's best action from its current belief $\bel_{i}$. However, it offers several important improvements over existing algorithms. Firstly, it incorporates the PUCT algorithm \citep{ silverGeneralReinforcementLearning2018} for selecting actions during search. PUCT can significantly improve planning efficiency by biasing search towards the most relevant actions according a \textit{search policy}. This makes it possible to plan for longer horizons, as well as offers improved integration of value functions for leaf node evaluation. To address the limitation of PUCT, namely that it relies on access to a good search-policy, the second improvement offered by \algacronym is the use of a novel meta-policy as the \textit{search-policy}. The meta-policy has the advantage that it can be efficiently generated from the policy set $\Pi$, and offers a robust prior since it considers performance across the entire set of other agent policies.

\subsection{Meta-Policy} \label{sssec:meta-policy}

In this work, a meta-policy $\sigma_{i}$ is a function mapping a joint policy to a mixture over individual policies. For our purposes it is a mapping from the set of other agent joint policies to a distribution over the set of valid policies for the planning agent $\sigma_{i}: \Pi_{-i} \rightarrow \Delta(\Pi_{i})$, so that $\sigma_{i}(\polSub{i, k} | \polSub{-i, m}) = \prob(\polSub{i, k} | \polSub{-i, m})$ for $\polSub{i, k} \in \Pi_{i}, \polSub{-i, m} \in \Pi_{-i}$. In symmetric environments, the set of valid policies for the planning agent $i$ is the set of all individual policies for any of the other agents $\Pi_{-i} = \bigcup_{j \neq i} \Pi_{j}$. In asymmetric environments, it may be necessary to have access to a distinct set of policies for agent $i$. In practice, these could be any policies used to train the other agent policies in $\Pi_{-i}$ or could be a separate set of policies generated from data or heuristics. 

Ideally the meta-policy would map from the planning agent's belief to a mixture over policies. However, if we think of each policy as an action we can see that finding a mapping from beliefs to a mixture over policies has the same challenges as finding a mapping from beliefs to the primitive actions of the underlying POSG. Instead we propose a meta-policy $\sigma_{i}$ as described - mapping from the set of other agent joint policies to a distribution over the set of valid policies for the planning agent - that is efficient to compute and which can then be used to improve planning over primitive actions. Furthermore, this meta-policy has the added advantage that it is relatively inexpensive to adapt if $\Pi$ changes, for example if a new type is added between episodes.

The main idea is to generate the meta-policy using an empirical game constructed from the policies in $\Pi_{-i} \cup \Pi_{i}$. An \textbf{empirical game}, much smaller in size than the full game, is a normal-form game where the actions are policies and the expected returns for each joint policy are estimated from sample games \citep{walshAnalyzingComplexStrategic2002, wellmanMethodsEmpiricalGametheoretic2006, lanctotUnifiedGametheoreticApproach2017}. Formally, an empirical game is a tuple $\langle \Pi, U^{\Pi}, N \rangle$ where $N$ is the number of players, $\Pi = \langle \Pi_{1}, \dots, \Pi_{N} \rangle = \Pi_{-i} \cup \Pi_{i}$ is the set of policies for all players and $U^{\Pi} : \Pi \rightarrow \realSet^{N}$ is a payoff table of expected returns (averaged over multiple games) for each joint policy played by all players, with $U^{\Pi}_i(\pi_i,\pi_{-i})\in\mathbb{R}$ denoting the payoff for player $i$ when using policy $\pi_{i}$ against the other agents using joint policy $\pi_{-i}$ (an example is shown in \fref{fig:meta_pi}). Where a joint policy is an assignment of a policy $\polSub{j} \in \Pi_{j}$ for each agent $j \in 1, \dots, N$. If agent's are symmetric then we take the average over permutations of the same joint policy (joint policies with same individual policies but assigned to different agents).  

Empirical games have the advantage that they can be very efficient to compute, since they require only a finite number of simulations and each simulation can be very fast to run depending on the representation of the policies and the environment. For example, each simulation takes less than a second for our environments with policies represented as neural networks. For slow-to-query policies, such as those that use search, the process will be slower but the process of computing the payoffs is trivially parallelizable and accuracy can be traded-off with time by changing the number of simulations used. It may also be possible to approximate slow-to-query policies with a fast policy by training a function approximator using imitation learning as suggested by \cite{timbersApproximateExploitabilityLearning2022a}. Furthermore, when a new policy is added to the set $\Pi$, only the payoffs for that policy need to be computed, which requires only $|\Pi|$ pay-offs to be computed, making it relatively inexpensive to add a new policy.

The next question is, given the empirical-game payoffs $U^{\Pi}$ how should we define the meta-policy. Since the meta-policy will ultimately be used to guide search for the planning agent, we would like it to select the policy from the set $\Pi_{i}$ that maximizes performance of the planning agent. One possible method is to select the policy $\polSub{i} \in \Pi_{i}$ that has the highest payoff against a given other agent policy $\polSub{-i}$. However, it is possible that the best response policy from the set $\Pi_{i}$ may change during an episode depending on the planning agent's current belief, while $U^{\Pi}$ is defined based on the expected performance from the start of an episode. This means that a meta-policy that only selects the maximizing policy from the set $\Pi_{i}$ with respect to the payoff table $U^{\Pi}$ has the potential to select a sub-optimal policy with respect to the planning agent's current belief. 

To protect against the potential pitfall's of a myopic meta-policy, we propose a flexible meta-policy based on the softmax function. Specifically, we define the softmax meta-policy $\sigma_{i}^{\tau}$ where, $\sigma_{i}^{\tau}(\polSub{i} | \polSub{-i}) = \frac{1}{\eta}\exp[\frac{1}{\tau}U^{\Pi}(\polSub{i}, \polSub{-i})]$ with normalizing constant $\eta = \sum_{\polSub{i}} \exp[\frac{1}{\tau}U^{\Pi}(\polSub{i}, \polSub{-i})]$ and temperature hyperparameter $\tau$ which controls how uniform or greedy the policy is. As $\tau \to 0$ the meta-policy becomes greedier, with $\sigma_{i}^{0}$ being the greedy policy that selects $\polSub{i} \in \Pi_{i}$ that maximizes the empirical payoff $U^{\Pi}$. Conversely, as $\tau \to \infty$ becomes more uniform, with $\sigma_{i}^{\infty}(\polSub{i} | \polSub{-i}) = 1/|\Pi_i|$ being the uniform meta-policy. 

The process of computing a meta-policy is relatively straightforward and needs only be done once for a given set of policies $\Pi$. \fref{fig:meta_pi} shows a high-level overview of the process for computing a greedy meta-policy $\sigma_{i}^{0}$. Firstly, each joint policy in the set $\Pi$ is simulated, for some number of episodes (we use 1000 in our experiments). Next, the average payoff for each pair of policies is used to construct the empirical game payoff table $U^{\Pi}$. Finally, the meta-policy is computed according to its equation using the payoff table $U^{\Pi}$. If a policy is added or removed from the set $\Pi$ (between episodes) only the entries for this policy need to be added/removed, before the meta-policy can be computed again. This makes it fairly easy to use with different variations of policy sets, which can be useful for rapid experimentation or for making quick adjustments depending on the settings where it is being used. 

\subsection{\algacronym}

We now present \algacronym, which consists of agent beliefs over \hpsnames, and MCTS using PUCT and the meta-policy for selecting actions from each belief. Importantly, \algacronym does not utilize knowledge of the actual policy the other agent is using during planning. Rather \algacronym maintains a joint belief over the environment state, the possible policies of the other agent, as well as the other agent's history. The meta-policy then computes the search-policy from each belief, conditioned on that belief's distribution over the other agent's policy.

\subsubsection{Beliefs over \hpsnames}

To correctly model the environment and other agents, \algacronym maintains a belief over the other agents' histories $\histSub{-i} \in \histSet{-i}$, their policies $\pi_{-i} \in \Pi_{-i}$, and the environment state $s \in \stSpace$. Each belief is thus a distribution over history-policy-state tuples, which we refer to as \hpsnames and denote using $\histSt$ and its components using dot notation: $\histSt.\st$, $\histSt.\histJoint$, $\histSt.\pi_{-i}$. The space of \hpsnames is denoted $\histStSpace$. Using this notation, the planning agent's belief is a distribution over \hpsnames, where $\bel_{i}(\histSt | \hist_{i}) = \prob(\histSt_{t} = \histSt | \hist_{i, t} = \hist_{i})$.

Defining the belief using \hpsnames transforms the original POSG problem into a POMDP for the planning agent where the other agents' histories and policies, and the environment state are learned online. This conversion is analogous to the one employed by the more general I-POMDP framework \citep{gmytrasiewiczFrameworkSequentialPlanning2005}. However, unlike the I-POMDP, we only consider the possible policies of the other agents, and use their history to represent their internal state, rather than an explicit belief. 

\subsubsection{Meta-policy MCTS with \hpsnames}

\algacronym extends the POMCP \citep{silverMonteCarloPlanningLarge2010} algorithm to planning with beliefs over \hpsnames. Being an online planner, each step \algacronym executes a search to find the next action, followed by particle filtering to update the agent's belief given the most recent observation. To do this \algacronym builds a search tree $T$ of agent histories using the PUCT algorithm \citep{rosinMultiarmedBanditsEpisode2011, silverGeneralReinforcementLearning2018} and a meta-policy as the search policy. Each node of the tree corresponds to a history, where $T(\hist_{i})$ denotes the node for history $\hist_{i}$, and maintains an approximate belief over \hpsnames $\hat{\bel}_{i}(\hist_{i})$ represented using a set of unweighted particles where each particle corresponds to a sample \hpsname $\histSt$. For each action $\actSub{i} \in \actSet{i}$ from $\hist_{i}$ there is an edge $\hist_{i}\actSub{i}$ that stores a set of statistics $\langle N(\hist_{i}\actSub{i}), P(\actSub{i} | \hist_{i}), W(\hist_{i}\actSub{i}), Q(\hist_{i}\actSub{i}) \rangle$, where $N(\hist_{i}\actSub{i})$ is the visit count, $P(\actSub{i}|\hist_{i})$ is the prior probability of selecting $\actSub{i}$ given $\hist_{i}$, $W(\hist_{i}\actSub{i})$ is the total action-value, and $Q(\hist_{i}\actSub{i})$ is the mean action-value.

For each real step at time $t$ in the environment, \algacronym
constructs the tree $T$ rooted at the agent's current history \histSub{i, t} via a series of simulated episodes. Each simulation starts from a \hpsname sampled from the root belief $\hat{\belSub{i}}(\histSub{i, t})$ and proceeds in three stages. Pseudo-code for the search procedure is shown in \aref{alg:search}.

\begin{algorithm}[t]
    \caption{\algacronym Search}
    \label{alg:search}
    \begin{algorithmic}[0]
        \Procedure{Search}{$\histSub{i}$}
            \While{search time limit not reached}
                \State $\histSt \sim \hat{\bel_{i}}(\cdot, \histSub{i})$
                \State $\polSub{i} \sim \sigma_{i}(\histSt.\polSub{-i})$
                \State \Call{Simulate}{$\histSt$, $\histSub{i}$, $\polSub{i}$, $0$} 
            \EndWhile
            \State \textbf{return} $\argmax_{\actSub{i} \in \actSet{i}} N(\histSub{i}\actSub{i})$
        \EndProcedure
    \end{algorithmic}
        
    \begin{algorithmic}[0]
        \Procedure{Simulate}{$\histSt$, $\histSub{i}$, $\polSub{i}$, $depth$}
            \If {$\disc^{depth} < \epsilon$}
                \State \textbf{return} 0
            \EndIf

            \If{$\histSub{i} \notin T$}
                \State \textbf{return} \Call{Expand}{$\histSub{i}$, $\polSub{i}$}
            \EndIf
            
            \State $\actSub{i} \leftarrow$ \Call{PUCT}{$\histSub{i}$}
            \State $\actSub{-i} \sim \histSt.\pol_{-i}(\cdot | \histSt.\hist_{-i})$
            \State $\langle s', \obsJoint, \rewVec \rangle \sim \gen(\histSt.\st, \langle \actSub{i}, \actSub{-i} \rangle)$
            
            \State $\histSt' \leftarrow \langle \st', \histSt.\histSub{-i}\actSub{-i}\obsSub{-i}, \histSt.\pi_{-i} \rangle$
            \State $G_{i} \leftarrow \rewSub{i} + \disc$ \Call{Simulate}{$\histSt', \histSub{i}\actSub{i}\obsSub{i}, \polSub{i}, depth+1$}
            \State $\hat{\bel}_{i}(\histSub{i}\actSub{i}\obsSub{i}) \leftarrow \hat{\bel}_{i}(\histSub{i}\actSub{i}\obsSub{i}) \cup \{\histSt'\}$
            \State $N(\histSub{i}\actSub{i}) \leftarrow N(\histSub{i}\actSub{i}) + 1$
            \State $W(\histSub{i}\actSub{i}) \leftarrow W(\histSub{i}\actSub{i}) + G_{i}$
            \State $Q(\histSub{i}\actSub{i}) \leftarrow \frac{W(\histSub{i}\actSub{i})}{N(\histSub{i}\actSub{i})}$
            \For {$\hat{\actSub{i}} \in \actSet{i}$}
                \State $P(\hat{\actSub{i}} | \histSub{i}) \leftarrow P(\hat{\actSub{i}} | \histSub{i}) +  \frac{\polSub{i}(\hat{\actSub{i}}|\histSub{i}) - P(\hat{\actSub{i}} | \histSub{i})}{N(\histSub{i})}$
            \EndFor

            \State \textbf{return} $G_{i}$
        \EndProcedure
    \end{algorithmic}
\end{algorithm}

\begin{algorithm}[t]
    \caption{\algacronym PUCT Action selection}
    \label{alg:puct}
    \begin{algorithmic}[0]
        \Procedure{PUCT}{$\histSub{i}$}
            \For {$\actSub{i} \in \actSet{i}$}
                \State $U(\histSub{i}\actSub{i}) \leftarrow c (P(\actSub{i}|\histSub{i}) (1 - \lambda) + \frac{\lambda}{|\actSet{i}|}) \frac{\sqrt{N(\hist_{i})}}{1 + N(\hist_{i}\actSub{i})}$
            \EndFor
            \State \textbf{return} $\underset{\actSub{i} \in \actSet{i}}{\argmax} \left\{ Q(\hist_{i}\actSub{i}) + U(\histSub{i}\actSub{i}) \right\}$
        \EndProcedure
    \end{algorithmic}
\end{algorithm}

\begin{algorithm}[t]
    \caption{\algacronym Value Function Node Expansion}
    \label{alg:expand}
    \begin{algorithmic}[0]
        \Procedure{EXPAND}{$\histSub{i}$, $\polSub{i}$}
            \For {$\actSub{i} \in \actSet{i}$}
                \State $N(\histSub{i}\actSub{i}) \leftarrow 0$
                \State $P(\actSub{i}|\histSub{i}) \leftarrow \polSub{i}(\actSub{i}|\histSub{i})$
                \State $W(\histSub{i}\actSub{i}) \leftarrow 0$
                \State $Q(\histSub{i}\actSub{i}) \leftarrow 0$
            \EndFor
            \State \textbf{return} $V^{\polSub{i}}(\histSub{i})$
        \EndProcedure
    \end{algorithmic}
\end{algorithm}

In the first stage, until a leaf node is reached, actions for the planning agent $i$ are selected using the PUCT algorithm (\aref{alg:puct}), while actions for the other agent $-i$ are sampled using their policy and history contained within the sampled \hpsname: $\actSub{-i} \sim \histSt.\pi_{-i}(\histSt.\hist_{-i})$
Our version of PUCT action selection adds uniform exploration noise $1/|\actSet{i}|$ with a mix-in proportion $\lambda$, while the constant $c$ controls the influence of the exploration value $U(\histSub{i}\actSub{i}$) relative to the action-value $Q(\histSub{i}\actSub{i})$. This differs from \cite{silverGeneralReinforcementLearning2018} where they sample the noise from a Dirichlet distribution for the current root node only and where the noise is used for exploration during training over multiple episodes. In this work we are instead interested in balancing exploration within a single episode and so utilize uniform noise as motivated by Theorem~\ref{thm:convergence}. Depending on the values of the constants $\lambda$ and $c$, each action will eventually be explored even if the search-policy assigns it zero probability, and ensures the search can always find the optimal action given enough planning time. 

In the second stage, upon reaching a leaf node ($\histSub{i} \notin T$), the leaf node is evaluated and expanded by adding it to the tree and adding an edge for each action (\aref{alg:expand}). Evaluation of the leaf node involves estimating two properties of the node; the value $v_{i}$ and the policy $p_{i}$. Both of these are computed using a policy from the set $\Pi_{i}$ which is sampled according to the meta-policy and the other agent's policy contained in the \hpsname particle $\polSub{i} \sim \sigma_{i}(\cdot |\histSt.\polSub{-i})$ \footnote{Importantly, note that the other agent policy $\histSt.\polSub{-i}$ is sampled according to the planning agents belief $\hat{\bel_{i}}$ and so may be different to the true policy of the other agent.}. Estimating $v_{i}$ assumes that the policy $\polSub{i}$ has a value function, which is the case for policies generated using most learning and planning methods. However, if a value function is not available $v_{i}$ can be estimated using a MC-rollout instead. Each edge $\histSub{i}\actSub{i}$ from the leaf node is initialized to $\langle N(\hist_{i}\actSub{i}) = 0, P(\actSub{i}|\hist_{i}) = p_{i}(\actSub{i}), W(\hist_{i}\actSub{i}) = 0, Q(\hist_{i}\actSub{i}) = 0 \rangle$. The value estimate of the leaf node $v_{i}$ is not used to initialize the edges but instead used in the last stage of the simulation. 

In the third and final stage, the statistics for each edge along the simulated trajectory are updated by propagating the value $v_{i}$ from the leaf node back-up to the root node of the tree. The policy prior for each edge $\histSub{i}\actSub{i}$ along this path are also updated by averaging over the existing prior and the latest policy $P(\actSub{i} | \histSub{i}) \leftarrow P(\actSub{i} | \histSub{i}) + [\polSub{i}(\actSub{i}|\histSub{i}) - P(\actSub{i} | \histSub{i})]/N(\hist_{i})$. This is a crucial difference between ours and previous methods. Previous methods apply PUCT to trees where each node is treated as a fully-observed state and so only compute the prior once when the node is first expanded \citep{silverGeneralReinforcementLearning2018, schrittwieserMasteringAtariGo2020}. In our setting each node in the tree is an estimate of the planning agent's belief. When a node is first expanded it contains only a single particle and so is likely inaccurate, and thus the policy prior will also be inaccurate. As the node is visited during subsequent simulations the belief accuracy improves and so in our method we iteratively update the policy prior to reflect this. Thus as the number of visits to a belief increases, i.e. $N(\histSub{i}) \rightarrow \infty$, we have:

\begin{equation*}
    P(\actSub{i} | \histSub{i}) \rightarrow \sum_{\polSub{-i} \in \Pi_{-i}} \bel_{i}(\polSub{-i} | \histSub{i}) \sum_{\polSub{i} \in \Pi_{i}} \sigma_{i}(\polSub{i} | \polSub{-i}) \polSub{i}(\actSub{i} | \histSub{i}) 
\end{equation*}

Where $\bel_{i}(\polSub{-i} | \histSub{i})$ is the true posterior belief over the other agent's policy $\polSub{-i}$ given the planning agent's history. In this way $P(\actSub{i} | \histSub{i})$ is a function of the belief over the other agent's policy. 

Once search is complete, the planning agent selects the action at the root node with the greatest visit count $\actSub{i, t} = \argmax_{\actSub{i}} N(\histSub{i, t}\actSub{i})$ and receives an observation $\obsSub{i, t+1}$ from the real environment. At this point $T(\histSub{i, t}\actSub{i, t}\obsSub{i, t+1})$ is set as the new root node of the search tree and $\hat{\belSub{i}}(\histSub{i, t}\actSub{i, t}\obsSub{i, t+1})$ the new root belief.

\section{Theoretical Properties}
In this section we show that \algacronym converges to the Bayes-optimal policy with respect to the policy set $\Pi_{-i}$ and prior $\rho$. The proof is based on the conversion of the problem to a POMDP, which allows us to apply the analysis in \citet{silverMonteCarloPlanningLarge2010}. We point out however, that the original analysis was based on using the UCB algorithm \citep{auerFinitetimeAnalysisMultiarmed2002}. We extend their proof to apply to the PUCB algorithm \citep{rosinMultiarmedBanditsEpisode2011}, which requires an additional assumption on the prior probabilities assigned to each action in order to ensure sufficient exploration during search for convergence in the limit to be guaranteed. Note, in our implementation we use the $\lambda$ parameter which can be chosen so that the assumption is met even if the the search-policy prior assigns zero probability to some actions.

Define $V(\histSub{i}) = \underset{\actSub{i} \in \actSet{i}}{\max} Q(\histSub{i}\actSub{i}) \,\, \forall \histSub{i} \in \histSet{i}$.

\begin{theorem} \label{thm:convergence}
    For all $\epsilon > 0$ (the numerical precision, see \aref{alg:search}), given a suitably chosen c (e.g. $c > \frac{R_{max}}{1-\gamma}$) and prior probabilities $P(\actSub{i}|\histSub{i}) > 0, \forall \histSub{i} \in \histSpace_{i}, \actSub{i} \in \actSet{i}$ (e.g. $\lambda > 0$), from history $\histSub{i}$ \algacronym constructs a value function at the root node that converges in probability to an $\epsilon'$-optimal value function, $V(\histSub{i}) \xrightarrow{p} V^{*}_{\epsilon'}(\histSub{i})$, where $\epsilon' = \frac{\epsilon}{1 - \gamma}$. As the number of visits $N(\histSub{i})$ approaches infinity, the bias of $V(\histSub{i})$ is $O(\log{N(\histSub{i})}/N(\histSub{i})).$
\end{theorem}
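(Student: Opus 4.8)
The plan is to reduce the problem to a POMDP and then port the POMCP convergence argument of \citet{silverMonteCarloPlanningLarge2010}, supplying the one missing ingredient: an analysis of the PUCB selection rule \citep{rosinMultiarmedBanditsEpisode2011} in place of the UCB rule \citep{auerFinitetimeAnalysisMultiarmed2002}. The reduction is the \hpsname construction of \sref{sec:Method}: augmenting the environment state with the other agents' histories and policies yields states $\histSt \in \histStSpace$, and marginalising the other agents' actions through $\histSt.\pi_{-i}$ turns the POSG into a single-agent POMDP $\pomdp$ for the planning agent. Under this reduction \textsc{Simulate} is exactly POMCP's recursion on $\pomdp$ and $\hat{\bel}_{i}$ is its particle belief, so the particle-filter consistency argument transfers essentially verbatim: as $N(\histSub{i}) \to \infty$ the belief at every visited node converges to the true posterior and the leaf evaluations are drawn from the correct beliefs. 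It therefore remains only to show that, with PUCB selection, the per-node bandit converges at the claimed rate.

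First I would prove an exploration lemma: at any fixed node every action is sampled infinitely often. Writing the effective prior $\tilde{P}(\actSub{i}|\histSub{i}) = P(\actSub{i}|\histSub{i})(1-\lambda) + \lambda/|\actSet{i}|$, the assumption $P(\actSub{i}|\histSub{i}) > 0$ (or $\lambda > 0$) gives $\tilde{P}(\actSub{i}|\histSub{i}) \ge \lambda/|\actSet{i}| > 0$ uniformly. If some action were selected only finitely often, then $N(\histSub{i}\actSub{i})$ stays bounded while $N(\histSub{i}) \to \infty$, so its bonus $U(\histSub{i}\actSub{i}) = c\,\tilde{P}(\actSub{i}|\histSub{i})\sqrt{N(\histSub{i})}/(1+N(\histSub{i}\actSub{i}))$ diverges; since every $Q(\histSub{i}\actSub{i})$ lies in $[-R_{max}/(1-\disc),\, R_{max}/(1-\disc)]$, a choice such as $c > R_{max}/(1-\disc)$ forces the action to be re-selected, a contradiction. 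This is precisely where the strictly positive prior is required, and why the $\lambda/|\actSet{i}|$ floor is added to the AlphaZero-style rule of \citet{silverGeneralReinforcementLearning2018}.

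I would then run the standard UCT-style induction on depth, from the leaves upward. The cutoff $\disc^{depth} < \epsilon$ truncates the horizon: discarding all rewards beyond the cutoff depth $d$ perturbs any value by at most $\disc^{d}R_{max}/(1-\disc) < \epsilon R_{max}/(1-\disc)$, which both identifies the target $V^{*}_{\epsilon'}(\histSub{i})$ with the optimal value of the depth-truncated POMDP and supplies the $\epsilon' = \epsilon/(1-\disc)$ slack (up to the reward scale). At a leaf the estimate is exact up to this truncation; assuming inductively that each child value converges in probability with $O(\log N / N)$ bias, the returns $G_{i}$ backed up to the parent have a slowly drifting mean, and the PUCB regret bound yields convergence of $V(\histSub{i}) = \max_{\actSub{i}} Q(\histSub{i}\actSub{i})$ to $V^{*}_{\epsilon'}(\histSub{i})$ with the same $O(\log N(\histSub{i})/N(\histSub{i}))$ bias.

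The hard part will be the PUCB concentration and regret analysis underpinning the inductive step. Unlike UCB, whose $\sqrt{\log N/N(\histSub{i}\actSub{i})}$ bonus is matched to Hoeffding tails, the PUCB bonus scales as $\tilde{P}(\actSub{i}|\histSub{i})\sqrt{N(\histSub{i})}/(1+N(\histSub{i}\actSub{i}))$, so I must show it still pulls each suboptimal arm $\Theta(\log N(\histSub{i}))$ times---enough for consistency yet few enough to preserve logarithmic regret and hence the $O(\log N/N)$ bias---with the uniform lower bound $\tilde{P} \ge \lambda/|\actSet{i}|$ guaranteeing a strictly positive, arm-uniform exploration rate. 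A further complication, inherited from the UCT analysis, is that the backed-up returns are non-stationary while the subtrees are still converging; recovering the bias bound requires adapting the drift-aware tail bounds of that analysis to the PUCB exploration schedule rather than the UCB one.
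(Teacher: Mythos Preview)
Your high-level strategy matches the paper's: reduce the typed POSG to a single-agent POMDP over \hpsnames, then lift the POMCP convergence argument of \citet{silverMonteCarloPlanningLarge2010}, with the PUCB rule standing in for UCB. The two proofs diverge mainly in how much of the bandit analysis is redone. The paper does not rederive PUCB concentration or regret at all; it simply cites \citet{rosinMultiarmedBanditsEpisode2011}, Thm.~2 and Cor.~2, as establishing that PUCB with strictly positive priors enjoys the same asymptotic regret bound as UCB, and then invokes the Silver--Veness analysis wholesale. What you flag as ``the hard part''---adapting drift-aware tail bounds to the PUCB exploration schedule---is treated by the paper as a black box already supplied by Rosin. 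Your route is more self-contained and arguably more honest about the gap between Rosin's PUCB and the AlphaZero-style PUCT bonus actually used here, but it is considerably more work than the paper undertakes.

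Conversely, the paper is explicit about one step you compress. Before invoking POMCP, it proves two lemmas: first that the derived POMDP is value-equivalent to the POSG, and second that the \emph{POSG rollout distribution} produced by root-sampling $\langle \st, \polSub{-i}, \histSub{-i}\rangle$ and simulating forward equals the \emph{derived-POMDP rollout distribution}. This second lemma is what formally licenses mapping \algacronym's simulations onto POMCP's; your sentence ``\textsc{Simulate} is exactly POMCP's recursion on $\pomdp$'' asserts the conclusion but does not isolate the distributional claim that needs checking. It is not deep---a forward induction on depth---but it is the piece that connects the reduction to the algorithm actually run.
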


\begin{proof} (sketch, full proof is provided in the \appref{sup:proofs})
    A POSG with a set of stationary policies for the other agent, and a prior over this set is a POMDP, so the analysis from \citet{silverMonteCarloPlanningLarge2010} applies to \algacronym, noting that for $N(\histSub{i})$ sufficiently large PUCB has the same regret bounds as UCB given each action is given prior probability $P(\histSub{i}\actSub{i}) > 0, \forall \histSub{i} \in \histSpace_{i}, \actSub{i} \in \actSet{i}$ (\citet{rosinMultiarmedBanditsEpisode2011}, Thm. 2 and Cor. 2) and so the same analysis of POMCP using UCT applies to POMCP using PUCT. 
\end{proof}

\begin{figure}[t]
    \centering
    \includegraphics[width=\linewidth, height=0.175\textheight, keepaspectratio]{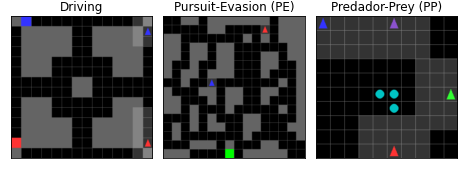}
    \caption{Experiment Environments}
    \label{fig:envs}
\end{figure}

\section{Experiments}
We ran experiments on three benchmark environments designed to evaluate \algacronym against existing state-of-the-art methods across a diverse set of scenarios. In addition, we conducted a number of ablations to investigate the different meta-policies and the learning dynamics of our method. \footnote{All code used for the experiments is available at \url{https://github.com/Jjschwartz/potmmcp}}

\subsection{Multi-Agent Environments}

For our experiments we used one cooperative, one competitive, and one mixed environment (\fref{fig:envs}). The largest of which has four agents and on the order of $10^{14}$ states and $10^{8}$ observations. Each environment models a practical real-world scenario; namely, navigation, pursuit-evasion, and ad-hoc teamwork. These environments add additional complexities to existing benchmarks \cite{eckScalableDecisiontheoreticPlanning2020,kakarlapudiDecisionTheoreticPlanningCommunication2022} and were chosen in order to assess \algacronym's ability across a range of domains that required both planning over many steps and reasoning about the other agent's behaviour. See \appref{sup:envs} for further details on each environment.

\textbf{Driving:} A general-sum grid world navigation problem requiring coordination \citep{lererLearningExistingSocial2019}. Each agent is tasked with driving a vehicle from start to destination while avoiding crashing into other vehicles.

\textbf{Pursuit-Evasion (PE):} A two-agent asymmetric zero-sum grid world problem where the \textit{evader} has to reach a safe location without being observed by the \textit{pursuer} \citep{seamanNestedReasoningAutonomous2018, schwartz2022intmcp}.

\textbf{Predator-Prey (PP):} A co-operative grid world problem where multiple predator agents must work together to catch prey \citep{loweMultiagentActorcriticMixed2017}. The \textit{two-agent} version has two predators with each prey requiring two predators to capture. The \textit{four-agent} version has four predators with each prey requiring three predators to capture.

\subsection{Policies}

For each environment, a diverse set of four to five policies was created and used for the set $\Pi$. These policies were used for the other agent during evaluations and also for the meta-policy $\sigma_{i}$ and policy prior $\rho$. In our experiments each policy was a deep neural network trained using the PPO RL algorithm \citep{schulmanProximalPolicyOptimization2017} and different multi-agent training schemes. See \appref{sup:fixed_policies} for details including multi-agent training scheme, training hyper-parameters, and empirical-game payoff matrices. 

\subsection{Baselines} \label{ssec:baselines}

We compared \algacronym against a number of baselines in each environment. For the planning baseline, we adapted the current state-of-the-art method for planning in partially observable, typed multi-agent systems \citet{kakarlapudiDecisionTheoreticPlanningCommunication2022} designed for I-POMDPs with communication. The full algorithm incorporates an explicit model of communication and while it can in principle support beliefs over multiple types, it was only tested in the setting where the other agent's type is fixed and known. We adapted their method by removing the explicit communication model and extending it to handle beliefs over multiple-agent types, we refer to this baseline as \textit{I-POMCP-PF}. We also use the same belief update and reinvigoration strategies for both I-POMCP-PF and \algacronym to keep the comparison fair and since both would benefit equally from changes to the belief updates.

\textbf{Meta-policy:} Selects a policy from the set $\Pi$ at the start of the episode based on the meta-policy with respect to the distribution $\rho$. This acts as a lower bound on the performance of \algacronym, and represents an agent with access to the meta-policy but without using beliefs or search. 

\textbf{Best-Response:} This is the best performing policy from the set of fixed policies $\Pi$ against each policy, assuming full-knowledge of the policy of the other agent. This acts as an approximate upper-bound given there is a policy in the set $\Pi$ that is a best-response policy to at least one policy in the set, which is the case in our experiments. 

\textbf{I-POMCP-PF + Random:} The I-POMCP-PF algorithm using the uniform-random policy and MC-rollouts for leaf node evaluations. 

\textbf{I-POMCP-PF + Meta:} The I-POMCP-PF algorithm using the value function of the meta-policy for leaf node evaluations in place of MC-rollouts.

\begin{figure*}[t]
    \centering
    \includegraphics[width=\linewidth, height=\textheight, keepaspectratio]{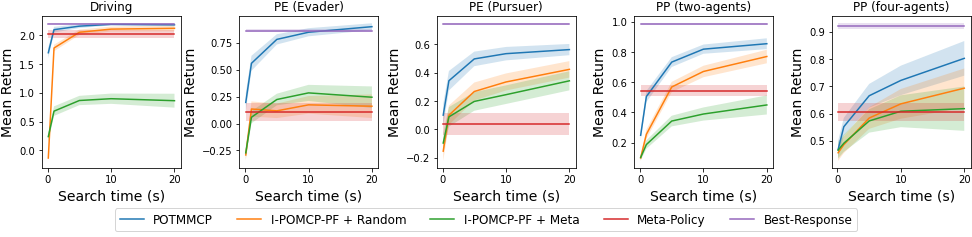}
    \caption{Mean episode return of \algacronym and baseline methods across each environment. Results are for \algacronym and baselines using the softmax $\sigma_{i}^{0.25}$ meta-policy. Shaded areas show the $95\%$ CI.}
    \label{fig:vs_baselines}
\end{figure*}

\begin{figure}[t]
    \centering
    \begin{subfigure}{.5\linewidth}
        \centering
        \includegraphics[width=\linewidth, height=\resultfigheight, keepaspectratio]{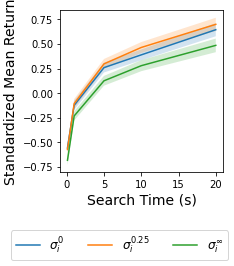}
    \end{subfigure}%
    \begin{subfigure}{.5\linewidth}
        \centering
        \includegraphics[width=\linewidth, height=\resultfigheight, keepaspectratio]{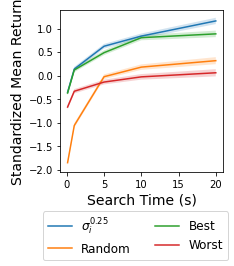}
    \end{subfigure}
    \caption{Evaluation of \algacronym using different search policies. Both plots show standardized mean episode returns averaged across all environments. (Left) Shows comparison of greedy $\sigma_{i}^{0}$, softmax $\sigma_{i}^{0.25}$, and uniform $\sigma_{i}^{\infty}$ meta-policies. (Right) Shows comparison between $\sigma_{i}^{0.25}$, the best and worst performing policy from the set of policies $\Pi_{i}$, and the uniform random policy. Shaded areas show $95 \%$ CI.}
    \label{fig:meta_pi_comparison}
\end{figure}

\subsection{Experimental Setup}

For each experiment we ran a minimum of 400 episodes, or 48 hours of total computation time, whichever came first. We tested each planning method using a search time per step of $N_{time} \in [0.1, 1, 5, 10, 20]$ seconds and $N_{particles} = 100 N_{time}$ particles. As per prior work \citep{silverMonteCarloPlanningLarge2010}, after each real step in the environment rejection sampling was used to add additional particles to the root belief until it contained at least $(1 + 1/16)N_{particles}$. For all experiments we used $\epsilon = 0.01$ and a discount of $\disc = 0.99$. For \algacronym we chose PUCT constants based on prior work \citep{schrittwieserMasteringAtariGo2020}. We used exploration constant $c = 1.25$ along with normalized $Q$-values to handle the returns being outside of $[0, 1]$ bounds in the tested environments. We used $\lambda = 0.5$ for the mix-in proportion, although additional experiments we found that \algacronym was very robust to the value of $\lambda$ in the environments and settings used for our experiments (see \appref{sup:lambda}). For the I-POMCP-PF baselines we used a UCB exploration constant of $c = \sqrt{2}$ \citep{auerFinitetimeAnalysisMultiarmed2002} along with normalized $Q$-values.

\subsection{Evaluation of Returns}

\fref{fig:vs_baselines} shows the mean episode return of \algacronym and the baseline methods in each environment. Given the same planning time, \algacronym outperformed both versions of I-POMCP-PF across all environments. Furthermore, \algacronym even matched the performance of the Best-Response baseline in two environments given enough planning time. We attribute the gains in performance of \algacronym to its ability to effectively utilize the meta-policy for biasing search and for leaf-node evaluation. 

Biasing the action-selection meant less planning time was spent exploring lower value actions (according to the meta-policy) and lead to significantly deeper searches ($\sim 13$ for \algacronym vs $\sim 5$ for I-POMCP-PF for 20 s planning time, see \appref{sup:search_depth}). This translated to a longer effective planning horizon for \algacronym and improved performance. This was especially evident in the PE (Evader) problem which requires long horizon planning as the evader agent must choose between many possible paths to its goal, and the choice it makes early in the episode affects its chances of reaching its goal without being spotted by the pursuer. 

Effectively utilizing the meta-policy's value function for leaf node evaluation meant that \algacronym was able to avoid expensive MC-rollouts and ultimately led to faster simulations and more efficient planning. This helps explain the gains in performance over I-POMCP-PF + Random, however we also observe similar or greater gains over I-POMCP-PF + Meta which, like \algacronym, uses the meta-policy's value function and avoids MC-rollouts. Indeed I-POMCP-PF's performance actually suffers from using the meta-policy when compared to using the random policy with MC-rollouts. This result highlights a limitation of UCB when combined with value-functions. In our experiments where the rewards are not especially dense, we found the difference in value estimates produced by the search-policy between two actions from the same belief can be very small (i.e. a factor of $\gamma$). When using UCB this small difference can be dominated by the variance in returns generated during MC simulations, and translates into UCB being unable to effectively distinguish the best action during search when using the meta-policy's value estimates. PUCT reduces the influence of the variance by biasing the actions according to the search-policy - which by definition selects actions that maximize the value estimates (even if the difference is small between actions). This bias acts to amplify the value of the best actions relative to the other actions and allows \algacronym to more effectively use the meta-policy's value estimates for planning, leading to the gains in planning efficiency and performance. Of course this has the drawback that if the search-policy is bad, then it takes more planning time to overcome the bias introduced and find the optimal actions. The fact that we see improved performance when using the meta-policy across all environments provides empirical evidence that it makes for a good search-policy.

\begin{figure}[t]
    \centering
    \includegraphics[width=\linewidth, height=\resultfigheight, keepaspectratio]{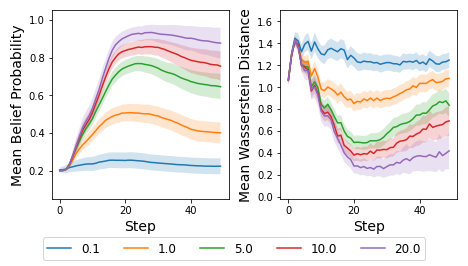}
    \caption{\algacronym's belief accuracy during an episode in the PP (two-agents) environment. (Left) Mean probability assigned by the belief to the other agent's true policy. (Right) Wasserstein distance between the belief's estimated action distribution and the other agent's true action distribution. Each line shows a different planning time. Shaded areas show $95\%$ CI.}
    \label{fig:belief_acc}
\end{figure}

\begin{figure*}[t]
    \centering
    \includegraphics[width=\linewidth, height=\resultfigheight, keepaspectratio]{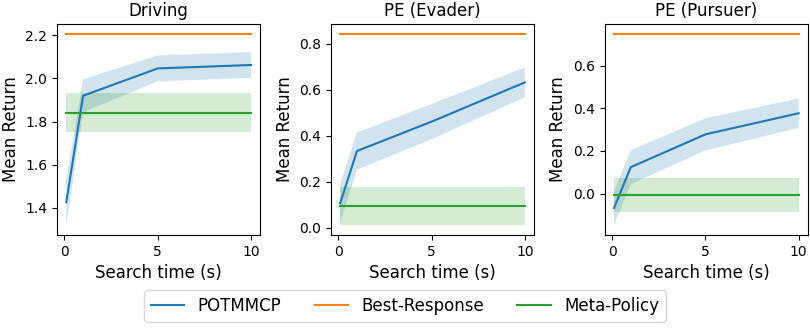}
    \caption{Mean episode return of \algacronym and baseline methods in the Driving and Pursuit-Evasion environments when using a larger policy set $|\Pi_{-i}| = 15$. Results are for \algacronym and baselines using the softmax $\sigma_{i}^{0.25}$ meta-policy. Shaded areas show the $95\%$ CI.}
    \label{fig:many_pi}
\end{figure*}

\subsection{Evaluation of Meta-Policies}

To assess the effect of meta-policy choice on performance we compared \algacronym using greedy $\sigma_{i}^{0}$, softmax $\sigma_{i}^{0.25}$, and uniform $\sigma_{i}^{\infty}$ meta-policies in each environment. \fref{fig:meta_pi_comparison} (left) shows the standardized mean episode returns of \algacronym using the different meta-policies averaged across all environments. Results for each individual environment are available in the \appref{sup:meta_pi}. Overall we found $\sigma_{i}^{0.25}$ was the most robust, performing best or close to best across all environments, and had the highest mean performance when averaged across all environments (although not significantly so). We expect this is likely due to the reasons covered in \sref{sssec:meta-policy}. It's worth noting that we didn't tune the $\tau$ parameter for our experiments and expect marginal performance gains may be seen with tuned values. A question for future work is whether an optimal value for $\tau$ can be inferred from the empirical payoff-matrix. 

To study the benefit of using a meta-policy and robustness to search policy choice, we tested \algacronym using different search policies in each environment. We did this by replacing the meta-policy with each of the policies in the set $\Pi_{i}$, as well as the uniform random policy. \fref{fig:meta_pi_comparison} (right) shows the standardized results averaged across all environments, while the results for each individual environment are available in \appref{sup:meta_vs_fixed}. We found that the meta-policy lead to the most consistent results, having similar or better performance than the best single-policy in each environment. While the best single-policy was able to perform comparable to the meta-policy, we typically observed a significant gap between the best and worst performing policies and no obvious way to tell beforehand which of the policies out the set would perform best/worse. Overall we found the meta-policy lead to the most robust performance without having to test each individual search policy.

\subsection{Evaluation of Beliefs} \label{sec:exp_beliefs}

To better understand the adaptive capabilities of \algacronym we analysed the evolution of beliefs during an episode. \fref{fig:belief_acc} shows the accuracy of \algacronym's belief for the PP (two-agent) problem while the results for all environments are available in \appref{sup:belief_accuracy}. We observed that in all environments \algacronym's belief in the correct other agent type increased as the episode progressed. A similar trend occurred for the action distribution accuracy, with the distance between the estimated and true distributions decreasing over time. We also found roughly monotonic improvement in belief accuracy with increased planning time. It's worth noting we also observed a drop in accuracy for steps that occurred beyond the typical episode length (e.g. $\sim 37$ for the PP (two-agent) problem), this is likely due to increased uncertainty from the environment moving into less common states which can impact the planning agent's beliefs and the behaviour of the other agents. However, even taking this into account, these results empirically demonstrate \algacronym's ability to learn the other agent's type from online interactions and helps explain \algacronym robust performance against a diverse set of policies. 

\subsection{Scalability with Policy Set Size}

So far our we have shown results for experiments where the size of the set of possible other agent policies has been relatively small, $|\Pi_{-i}| \in [4, 5]$. To test how well \algacronym performs compared to the baselines with larger policy sets, we ran additional experiments for the Driving and Pursuit-Evasion environments with $|\Pi_{-i}| \in 15$, the results are shown in \fref{fig:many_pi}. \algacronym's performance scaled well with the larger performance, performing significantly better than the meta-baseline across all experiments and also improving with planning time. Furthermore, due to \algacronym using Monte-Carlo simulation, the computation time per simulation does not increase with policy set size. The key potential limitation in practice is the belief accuracy, since the number of particles needed to represent the space of policies accurately will grow with the policy set size. However, depending on the diversity of the behaviours represented by the different policies, there can be considerable overlap between policy behaviours. In this case the number of particles needed to represent the belief so that it can still be used to find a good policy for the planning agent may not grow that much as the policy set of the other agent grows.

\section{Conclusion}
We presented a scalable planning method for type-based reasoning in large partially observable environments. Our algorithm, \algacronym, offers two key contributions over existing planners. The first is the use of PUCT for action selection during search. The second is a new meta-policy which is used to guide the search. Through extensive evaluations we demonstrate that \algacronym significantly improves on the performance and planning efficiency of existing state-of-the-art methods. Furthermore, we prove that \algacronym will converge in the limit to the Bayes-optimal solution. 

Multiple avenues for future research exist. Extending \algacronym to handle continuous state, action, and observation spaces would make it more widely applicable. We proposed a general and flexible meta-policy and validated it empirically, however based on prior work \citep{lanctotUnifiedGametheoreticApproach2017}, a more principled approach likely exists. Lastly, exploring methods for improving generalization to policies outside of the known set set would be a valuable addition.

%-------------------------------%
\paradot{Acknowledgements}
%-------------------------------%
We would like to thank Kevin Li for his feedback on an earlier draft.
This work is supported by an AGRTP Scholarship and the ANU Futures Scheme.

%%%%%%%%%%%%%%%%%%%%%%%%%%%%%%%%%%%%%%%%%%%%%%%%%%%%%%%%%%%%%%%
%%                 B i b l i o g r a p h y                   %%
%%%%%%%%%%%%%%%%%%%%%%%%%%%%%%%%%%%%%%%%%%%%%%%%%%%%%%%%%%%%%%%

\begin{small}
    \bibliography{ref}
\end{small}

%%%%%%%%%%%%%%%%%%%%%%%%%%%%%%%%%%%%%%%%%%%%%%%%%%%%%%%%%%%%%%%
%%                     A p p e n d i x                       %%
%%%%%%%%%%%%%%%%%%%%%%%%%%%%%%%%%%%%%%%%%%%%%%%%%%%%%%%%%%%%%%%

\newpage
\onecolumn
\appendix

\section{Proofs} \label{sup:proofs}

In this section, we show that \algacronym converges to the Bayes-optimal policy. Our proof first shows how a POSG can be converted into an equivalent POMDP, given the other agent is using a policy from a known distribution. This first step follows a similar construction to I-POMDPs \citep{gmytrasiewiczFrameworkSequentialPlanning2005} for converting a POSG to a POMDP, except we model the other agent's only by their policy and history as opposed to the more general I-POMDP formulation which also includes agent frames and beliefs. Next we show that our algorithm converges to the optimal solution in the derived POMDP (and thus the original POSG with the known other agent policy set) following similar steps to those in \citet{silverMonteCarloPlanningLarge2010}. Noting that we extend their proof to the multi-agent setting, and also to MCTS using PUCT.

\subsection{POSG-POMDP Value Equivalence}

Given the set of policies $\Pi_{-i} = \{\polSub{-i, m} | m = 1, \cdots, M\}$ for the other agent $-i$ and a prior distribution over them $\rho$, where $\rho(\polSub{-i,m}) = Pr(\polSub{-i, m})$, a POSG can be framed as a POMDP for the planning agent $i$. This frames the original POSG as a Bayesian Reinforcement Learning problem where the joint policy for the other agent $\polSub{-i}$ and their history $\histSub{-i}$ become hidden variables within the state space that must be inferred by the planning agent. 

Let $\histSt_{t} = \langle \st, \polSub{-i}, \histSub{-i, t} \rangle$ denote a \hpsname at time $t$. $\histStSpace_{t}$ denotes the space of time $t$ \hpsnames, and $\histStSpace = \{\histStSpace_{t} | 0 \leq t \leq T \}$ denotes the space of all possible \hpsnames for time horizon $T$.  $\histStSpace$ is finite given the action and observation spaces of all agents are finite. For problems with an infinite horizon (i.e. no step limit), discounting is required and the horizon can be set such that the value functions are $\epsilon$-optimal, as suggested by \citep{kocsisBanditBasedMontecarlo2006}. We use dot notation to denote the components of a $\histSt_{t}$ - $\histSt_{t}.\st$, $\histSt_{t}.\polSub{-i}$, $\histSt_{t}.\histSub{-i, t}$. For convenience we use $\histSt_{t}.\act_{-i}$ and $\histSt_{t}.\obs_{-i}$ to denote the last action and observation of agent $-i$ contained in the history $\histSt_{t}.\histSub{-i, t}$, i.e. $\act_{-i, t-1}$ and $\obs_{-i, t}$.

\begin{lemma}
    Given a set of stationary history-based policies $\Pi_{-i}$ for the other agents $-i$, a prior distribution over this set $\rho$, and a POSG $\mathcal{M} = \posgTuple$, consider the derived POMDP $\bar{\mathcal{M}} = \langle \histStSpace, \bar{\belInit}, \actSpace_{i}, \obsSpace_{i}, \bar{\transF}, \bar{\obsF}, \bar{\rewF} \rangle$ for planning agent $i$ with \hpsnames $\histSt_{t} = \langle \st, \polSub{-i}, \histSub{-i, t} \rangle$ as states, where, 
    
    \begin{equation*}
    \begin{aligned}
        \histStSpace &:= \stSpace \times \Pi_{-i} \times \histSet{-i} \\
        \bar{\belInit}(\histSt_{t}) &:= \belInit(\histSt_{t}.\st) \rho(\histSt_{t}.\polSub{-i})\delta(\histSt_{t}.\histSub{-i}, \emptyset) \\
        \bar{\transF}(\histSt_{t}, \act_{i}, \histSt_{t'}) &:= 
            \histSt_{t}.\polSub{-i}(\histSt_{t'}.\act_{-i} | \histSt_{t}.\hist_{-i}) \transF(\histSt_{t}.\st, \langle \act_{i}, \histSt_{t'}.\act_{-i} \rangle, \histSt_{t'}.\st) \\
            & \quad \times \obsFSub{-i}(\histSt_{t'}.\st, \langle \act_{i}, \histSt_{t'}.\act_{-i} \rangle, \histSt_{t'}.\obsSub{-i}) \delta(\langle t', \histSt_{t'}.\polSub{-i} \rangle, \langle t+1, \histSt_{t}.\polSub{-i} \rangle) \\
        \bar{\obsF}(\histSt_{t}, \actSub{i}, \obsSub{i}) &:=
            \obsFSub{i}(\histSt_{t}.\st, \langle \act_{i}, \histSt_{t}.\act_{-i} \rangle, \obs_{i}) \\
        \bar{\rewF}(\histSt_{t}, \act_{i}) &:= 
            \sum_{\actSub{-i} \in \actSet{-i}} \histSt_{t}.\polSub{-i}(\actSub{-i} | \histSt_{t}.\histSub{-i}) \rewFSub{i}(\histSt_{t}.\st, \langle \act_{i}, \act_{-i} \rangle)
    \end{aligned}
    \end{equation*}

    and $\delta$ is the Kronecker Delta function which is $1$ if the function arguments are equal, otherwise $0$, then the value function $\bar{V}^{\polSub{i}}(\hist_{i, t})$ of the derived POMDP is equivalent to the value function $V^{\polSub{i}}(\hist_{i,t})$ of the POSG, $\forall \polSub{i} \bar{V}^{\polSub{i}}(\hist_{i,t}) = V^{\polSub{i}}(\hist_{i, t})$.
    
    \label{lemma:posg_bapomdp}
\end{lemma}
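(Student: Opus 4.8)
The plan is to prove the value equivalence by backward induction on a finite, $\epsilon$-truncated horizon, comparing the Bellman recursions of the two history-indexed value functions and then passing to the infinite-horizon discounted limit. First I would verify that $\bar{\mathcal{M}}$ is a genuine POMDP: the \hpsname space $\histStSpace$ is finite whenever the horizon and all action and observation spaces are finite, and the Kronecker-$\delta$ factors in $\bar{\transF}$ guarantee that the other agents' joint policy $\histSt.\polSub{-i}$ is carried through a trajectory unchanged while the time index advances by exactly one step, so that $\bar{\transF}$ and $\bar{\obsF}$ are valid conditional distributions. The conceptual anchor of the argument is that the planning agent's history $\histSub{i,t}$ induces, through Bayesian filtering from $\bar{\belInit}$, a posterior $\belSub{i}(\histSt \mid \histSub{i,t})$ over \hpsnames that coincides exactly with the belief state of $\bar{\mathcal{M}}$; since this posterior is a sufficient statistic, both $V^{\polSub{i}}$ and $\bar{V}^{\polSub{i}}$ may legitimately be written as functions of $\histSub{i,t}$.

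The inductive step then reduces to checking three one-step identities between the POSG and $\bar{\mathcal{M}}$, each taken in expectation under the shared belief $\belSub{i}(\cdot \mid \histSub{i,t})$. (i) \emph{Rewards match}: by construction $\bar{\rewF}(\histSt, \actSub{i}) = \sum_{\actSub{-i}} \histSt.\polSub{-i}(\actSub{-i} \mid \histSt.\histSub{-i})\,\rewFSub{i}(\histSt.\st, \langle \actSub{i}, \actSub{-i}\rangle)$ is the expected immediate reward marginalized over the other agents' randomized action, so averaging over the belief reproduces the POSG expected reward given $\histSub{i,t}$ and $\actSub{i}$. (ii) \emph{Predictive observations match}: marginalizing $\bar{\transF}$ and $\bar{\obsF}$ over the hidden next \hpsname reconstructs precisely the POSG chain in which the other agents act via $\polSub{-i}$, the environment transitions under the joint action, and agent $i$ observes through $\obsFSub{i}$, so $\prob(\obsSub{i} \mid \histSub{i,t}, \actSub{i})$ agrees in both models. (iii) \emph{Belief updates match}: the POMDP belief obtained from $\belSub{i}(\cdot \mid \histSub{i,t})$ after the pair $(\actSub{i}, \obsSub{i})$ equals the POSG posterior $\belSub{i}(\cdot \mid \histSub{i,t}\actSub{i}\obsSub{i})$.

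I expect identity (iii) to be the main obstacle, since it is where all the bookkeeping lives. Establishing it requires expanding the POMDP belief update (Bayes' rule applied to $\bar{\transF}$ and $\bar{\obsF}$) and matching it term-by-term against the joint posterior over $\langle \st, \polSub{-i}, \histSub{-i}\rangle$ that the planning agent maintains in the POSG. The delicate point is that the other agent's \emph{unobserved} action $\actSub{-i}$ and observation $\obsSub{-i}$ are both marginalized inside $\bar{\transF}$ while simultaneously being appended to $\histSt.\histSub{-i}$; one must check that the factor $\histSt.\polSub{-i}(\actSub{-i} \mid \histSt.\histSub{-i})\,\obsFSub{-i}(\cdot)$ is exactly the weight with which each candidate extended history $\histSt.\histSub{-i}\actSub{-i}\obsSub{-i}$ enters the updated belief, and that the policy label $\polSub{-i}$ is propagated unchanged (enforced by the $\delta$ factor). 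Once this filtering identity holds, substituting the matched rewards and matched continuation values into the two Bellman equations closes the induction, giving $\bar{V}^{\polSub{i}}(\histSub{i,t}) = V^{\polSub{i}}(\histSub{i,t})$ for every $\polSub{i}$. Finally I would invoke the discounting and $\epsilon$-horizon argument of \citet{kocsisBanditBasedMontecarlo2006}, already noted in the text, to pass from the finite truncation to the infinite-horizon discounted value and complete the proof.
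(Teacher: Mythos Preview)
Your proposal is correct and uses the same backward-induction-on-Bellman skeleton as the paper, but your decomposition is more elaborate than what the paper actually does. The paper's proof is a single pass of algebraic rewriting: it writes out $V^{\polSub{i}}(\histSub{i,t})$ as the full sum over $\st,\polSub{-i},\histSub{-i,t},\actSub{i},\actSub{-i},\stp,\obsSub{i}$, inserts the harmless factor $\sum_{\obsSub{-i}}\obsFSub{-i}(\stp,\langle\actSub{i},\actSub{-i}\rangle,\obsSub{-i})=1$, and then recognises the resulting products as exactly $\bar{\rewF}$, $\bar{\transF}$, and $\bar{\obsF}$, invoking the inductive hypothesis on the continuation term. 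In particular, the paper never separately proves your item (iii): the belief $\belSub{i}(\histSt\mid\histSub{i,t})$ simply appears with the same symbol on both sides of the identity and is not re-derived via a filtering argument. What you flag as ``the main obstacle'' is thus bypassed rather than confronted; the only real manipulation is the $\sum_{\obsSub{-i}}$ insertion that makes $\bar{\transF}$ appear. Your route is more explicit about why the common belief is legitimate, which is arguably cleaner conceptually, but it is more work than the paper's proof requires.
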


\begin{proof} 
    By backward induction on the Bellman equation, starting from the horizon,
        
    \begin{align*}
        V^{\polSub{i}}(\hist_{i, t}) 
            % POSG bellman equation
            &= \sum_{\st \in \stSpace} 
                \sum_{\polSub{-i} \in \Pi_{-i}} 
                \sum_{\hist_{-i, t} \in \histSpace_{i, t}} 
                \belSub{i}(\langle \st, \polSub{-i}, \histSub{-i, t} \rangle | \hist_{i, t}) 
                \sum_{\actSub{i} \in \actSet{i}} \polSub{i}(\actSub{i} | \hist_{i, t}) 
                \sum_{\actSub{-i} \in \actSet{-i}} \polSub{-i}(\actSub{-i} | \hist_{-i, t}) \\
            & \quad \left[
                \rewFSub{i}(\st, \langle \actSub{i}, \actSub{-i} \rangle)
                + \disc 
                \sum_{\st' \in \stSpace}  \transF(\st, \langle \actSub{i}, \actSub{-i} \rangle, \st') 
                \sum_{\obsSub{i} \in \obsSet{i}} 
                \obsFSub{i}(\st', \langle \actSub{i}, \actSub{-i} \rangle, \obsSub{i}) 
                V^{\polSub{i}}(\hist_{i, t}\actSub{i}\obsSub{i}) 
            \right] \\
            % put agent -i actions inside []
            &= \sum_{\st \in \stSpace} \sum_{\polSub{-i} \in \Pi_{-i}} \sum_{\hist_{-i, t} \in \histSpace_{i, t}}
                % \sum_{\st, \polSub{-i}, \hist_{-i, t}} 
                \belSub{i}(\langle \st, \polSub{-i}, \histSub{-i, t} \rangle | \hist_{i, t}) 
                \sum_{\actSub{i} \in \actSet{i}} \polSub{i}(\actSub{i} | \hist_{i, t})
                \left[
                    \sum_{\actSub{-i} \in \actSet{-i}} \polSub{-i}(\actSub{-i} | \hist_{-i, t})
                \rewFSub{i}(\st, \langle \actSub{i}, \actSub{-i} \rangle)
                \right. \\
            & \quad \left. 
                + \disc 
                \sum_{\actSub{-i} \in \actSet{-i}} \polSub{-i}(\actSub{-i} | \hist_{-i, t})
                \sum_{\st' \in \stSpace} \transF(\st, \langle \actSub{i}, \actSub{-i} \rangle, \st') 
                \sum_{\obsSub{i} \in \obsSet{i}}
                \obsFSub{i}(\st', \langle \actSub{i}, \actSub{-i} \rangle, \obsSub{i}) 
                V^{\polSub{i}}(\hist_{i, t}\actSub{i}\obsSub{i}) 
            \right] \\
            % add sum over agent -i obs
            % this doesn't change the equality since the probability sums to 1
            &= \sum_{\st \in \stSpace} \sum_{\polSub{-i} \in \Pi_{-i}} \sum_{\hist_{-i, t} \in \histSpace_{i, t}}
                % \sum_{\st, \polSub{-i}, \hist_{-i, t}} 
                \belSub{i}(\langle \st, \polSub{-i}, \histSub{-i, t} \rangle | \hist_{i, t}) 
                \sum_{\actSub{i} \in \actSet{i}} \polSub{i}(\actSub{i} | \hist_{i, t})
                \\
            & \quad \times \left[ 
                \sum_{\actSub{-i} \in \actSet{-i}} \polSub{-i}(\actSub{-i} | \hist_{-i, t})
                \rewFSub{i}(\st, \langle \actSub{i}, \actSub{-i} \rangle)
                + \disc 
                \sum_{\actSub{-i} \in \actSet{-i}} \polSub{-i}(\actSub{-i} | \hist_{-i, t})
                \sum_{\st' \in \stSpace} \transF(\st, \langle \actSub{i}, \actSub{-i} \rangle, \st') \right. \\
            & \quad \left. \times
                \sum_{\obsSub{-i} \in \obsSet{-i}} \obsFSub{-i}(\st', \langle \actSub{i}, \actSub{-i} \rangle, \obsSub{-i})
                \sum_{\obsSub{i} \in \obsSet{i}} \obsFSub{i}(\st', \langle \actSub{i}, \actSub{-i} \rangle, \obsSub{i}) 
                V^{\polSub{i}}(\hist_{i, t}\actSub{i}\obsSub{i}) 
            \right] \\
            % substitute in derived POMDP functions
            &= \sum_{\st \in \stSpace} \sum_{\polSub{-i} \in \Pi_{-i}} \sum_{\hist_{-i, t} \in \histSpace_{i, t}}
                % \sum_{\st, \polSub{-i}, \hist_{-i, t}} 
                \belSub{i}(\langle \st, \polSub{-i}, \histSub{-i, t} \rangle | \hist_{i, t}) 
                \sum_{\actSub{i} \in \actSet{i}} \polSub{i}(\actSub{i} | \hist_{i, t})
                \\
            & \quad \times \left[ 
                \bar{\rewF}(\langle \st, \polSub{-i}, \histSub{-i, t} \rangle, \act_{i})
                + \disc 
                \sum_{\actSub{-i} \in \actSet{-i}} 
                \sum_{\st' \in \stSpace} 
                \sum_{\obsSub{-i} \in \obsSet{-i}}
                % \sum_{\actSub{-i}, \st', \obsSub{i}, \obsSub{-i}} 
                \bar{\transF}(
                    \langle \st, \polSub{-i}, \histSub{-i, t} \rangle, 
                    \act_{i}, 
                    \langle \st', \polSub{-i}, \histSub{-i, t}\actSub{-i}\obsSub{-i} \rangle)
                \right. \\
            & \quad \left.
                \times 
                \sum_{\obsSub{i} \in \obsSet{i}} 
                \bar{\obsF}(\langle \st', \polSub{-i}, \histSub{-i, t}\actSub{-i}\obsSub{-i}\rangle, \act_{i}, \obs_{i})
                \bar{V}^{\polSub{i}}(\hist_{i, t}\actSub{i}\obsSub{i}) 
            \right] \\
            % substitute in history policy state notation
            &= \sum_{\histSt_{t}} 
                \belSub{i}(\histSt_{t} | \hist_{i, t})
                \sum_{\actSub{i} \in \actSet{i}} \polSub{i}(\actSub{i} | \hist_{i, t}) 
                \\
            & \quad \times \left[
                \bar{\rewF}(\histSt_{t}, \actSub{i}) 
                + \disc 
                \sum_{\histSt_{t+1}}
                \bar{\transF}(\histSt_{t}, \actSub{i}, \histSt_{t+1})
                \sum_{\obsSub{i} \in \obsSet{i}}
                \bar{\obsF}(\histSt_{t+1}, \act_{i}, \histSt_{t+1}.\obs_{i})
                \bar{V}^{\polSub{i}}(\hist_{i, t}\actSub{i}\obsSub{i}) 
            \right] \\
            &= \bar{V}^{\polSub{i}}(\hist_{i, t})
    \end{align*}    

\end{proof}

\subsection{Convergence}

Here we show show that the proposed algorithm \algacronym converges to the $\epsilon$-optimal solution of the POSG under the type-based reasoning assumptions. The main steps of our proof are adapted from \citep{silverMonteCarloPlanningLarge2010}, however we extend the original proof to the type-based, multi-agent setting. Note, our proof does not hold for POSGs in general, but only for POSGs under the type-based reasoning assumptions. When mentioning the POSG, we are referring to the POSG problem along with a set of stationary history-based policies $\Pi_{-i}$ for the other agent $-i$, and a prior distribution over this set $\rho$.

Our proof is based on showing that, for an arbitrary rollout policy $\polSub{i}$ for the planning agent $i$, the \textit{POSG rollout distribution} (the distribution over full histories of agent $i$ when performing root sampling of the state $\st$, and other agent policy $\polSub{-i}$ and history $\histSub{-i}$) is equal to the \textit{derived POMDP rollout distribution} (the distribution over full histories when sampling in the \hpsname POMDP). Given that these two distributions are identical, the statistics maintained in the search tree will converge to the same number in expectation. This allows us to apply the analysis for MCTS in POMDPs from \citet{silverMonteCarloPlanningLarge2010} to \algacronym in typed POSGs. 

\begin{definition}
    The \textbf{POSG rollout distribution}, $\mathcal{D}^{\polSub{i}}_{K}(\histSub{i, d})$, is the distribution over histories for planning agent $i$ in a POSG, where the other agent $-i$'s policy is from the set $\Pi_{-i}$ with a prior distribution over this set $\rho$, when performing Monte-Carlo simulations according to a policy $\polSub{i}$ in combination with root sampling an initial state, other agent policy, and other agent history. This distribution, for a particular time $d$, is given by $\mathcal{D}^{\polSub{i}}_{K}(\histSub{i, d}) := \frac{1}{K_{d}} \sum_{k=1}^{K} \mathbb{I}_{\histSub{i, d}}(\histSub{i, d}^{(k)})$, where $K$ is the number of simulations that comprise the empirical distribution, $K_{d}$ is the number of simulations that reach depth $d$ (not all simulations might be equally long), and $\histSub{i, d}^{(k)}$ is the history specified by the $k$-th particle at time $d$.
\end{definition}

\begin{definition}
    The \textbf{derived POMDP rollout distribution}, $\hat{\mathcal{D}}^{\polSub{i}}(\histSub{i, d})$, is the distribution over histories for planning agent $i$ in the derived POMDP, when performing Monte-Carlo simulations according to policy $\polSub{i}$ and sampling state transitions, observations, and rewards from $\hat{\mathcal{M}}$. This distribution, for a particular time $d+1$ history, is given by $\hat{\mathcal{D}}^{\polSub{i}}(\histSub{i, d}\actSub{i}\obsSub{i}) = \hat{\mathcal{D}}^{\polSub{i}}(\histSub{i, d}) \polSub{i}(\actSub{i}|\histSub{i, d}) \sum_{\histSt_{d} \in \histStSpace_{d}} \belSub{i}(\histSt_{d} | \histSub{i, d}) \sum_{\histSt_{d+1} \in \histStSpace_{d+1}} \bar{\transF}(\histSt_{d}, \act_{i}, \histSt_{d+1}) \bar{\obsF}(\histSt_{d+1} \act_{i}, \obs_{i})$.
\end{definition}

Now, our main theoretical result is that these distributions are the same in the limit of the number of simulations.

\begin{lemma}
    Given a POSG $\mathcal{M}$, a set of stationary history-based policies $\Pi_{-i}$ for the other agent $-i$, and a prior distribution over this set $\rho$, for any rollout policy for planning agent $i$, $\polSub{i}$, the POSG rollout distribution converges in probability to to the derived POMDP rollout distribution, $\forall \polSub{i}, \histSub{i, d}\ \mathcal{D}^{\polSub{i}}_{K}(\histSub{i, d}) \xrightarrow{p} \hat{\mathcal{D}}^{\polSub{i}}(\histSub{i, d})$.

    \label{lemma:rollout_dists}
\end{lemma}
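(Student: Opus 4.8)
The plan is to factor the claim into a per-simulation distributional equivalence followed by a law-of-large-numbers argument. The substantive part is the per-simulation claim, which is the familiar ``root sampling equals belief update'' fact underlying POMCP: a single simulation draws one \hpsname $\histSt_0 \sim \belSub{i}(\cdot|\histSub{i})$ at the root and carries it forward through the generative model, yet this must reproduce the derived POMDP rollout law $\hat{\mathcal{D}}^{\polSub{i}}$, whose definition marginalizes over the hidden \hpsname using the posterior $\belSub{i}(\cdot|\histSub{i,d})$ at \emph{every} depth $d$. Once the per-simulation law is pinned down, convergence in probability is a routine averaging argument.

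To prove the per-simulation claim I would carry an inductive invariant on depth $d$: under the POSG rollout process, the joint law of the pair $(\histSub{i,d}, \histSt_d)$ equals $\hat{\mathcal{D}}^{\polSub{i}}(\histSub{i,d})\,\belSub{i}(\histSt_d | \histSub{i,d})$. The base case $d=0$ is immediate, since root sampling draws $\histSt_0 \sim \belSub{i}(\cdot|\histSub{i})$ while agent $i$'s history is fixed, so $\hat{\mathcal{D}}^{\polSub{i}}(\histSub{i,0})=1$. For the inductive step I would expand the one-step mechanism of the rollout: agent $i$ draws $\actSub{i}\sim\polSub{i}(\cdot|\histSub{i,d})$, agent $-i$ draws $\actSub{-i}\sim\histSt_d.\polSub{-i}(\cdot|\histSt_d.\histSub{-i})$, and $\langle\stp,\obsJoint,\rewVec\rangle\sim\gen(\histSt_d.\st,\langle\actSub{i},\actSub{-i}\rangle)$, with the \hpsname updated by appending $\actSub{-i}\obsSub{-i}$, holding $\polSub{-i}$ fixed, and advancing time. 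By \lref{lemma:posg_bapomdp} the product of the factors $\histSt_d.\polSub{-i}(\actSub{-i}|\cdot)$, $\transF$, and $\obsFSub{-i}$ together with the deterministic bookkeeping is exactly $\bar{\transF}(\histSt_d,\actSub{i},\histSt_{d+1})$, while $\obsFSub{i}$ is exactly $\bar{\obsF}$. Summing against the invariant at depth $d$ and applying Bayes' rule to identify $\belSub{i}(\histSt_{d+1}|\histSub{i,d+1})$ reestablishes the invariant at depth $d+1$; marginalizing out $\histSt_{d+1}$ then yields precisely the recursion defining $\hat{\mathcal{D}}^{\polSub{i}}(\histSub{i,d}\actSub{i}\obsSub{i})$.

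With the per-simulation law in hand, the second stage is standard. Because the rollout policy $\polSub{i}$ is held fixed, the $K$ simulations are independent and identically distributed, so for each fixed $\histSub{i,d}$ the indicators $\mathbb{I}_{\histSub{i,d}}(\histSub{i,d}^{(k)})$ are i.i.d.\ Bernoulli variables with mean $\hat{\mathcal{D}}^{\polSub{i}}(\histSub{i,d})$, and the weak law of large numbers gives convergence in probability of their average. To accommodate the normalization by $K_d$ (the number of simulations reaching depth $d$) rather than by $K$, I would note that $K_d/K$ converges in probability to the probability of reaching depth $d$ under $\hat{\mathcal{D}}^{\polSub{i}}$ and combine the two limits via Slutsky's theorem and the continuous mapping theorem, taking care that the conditioning on ``reaching depth $d$'' matches between the empirical and target distributions.

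The main obstacle is the inductive step of the per-simulation equivalence: showing that propagating a \emph{single} root-sampled \hpsname forward, once averaged over the root draw, reproduces the depth-wise belief marginalization built into the definition of $\hat{\mathcal{D}}^{\polSub{i}}$. This is exactly the point where the Markov structure of the derived POMDP and the one-step kernel identity from \lref{lemma:posg_bapomdp} must be combined with a Bayes update, and where the bookkeeping of the unrecorded variables $(\actSub{-i},\obsSub{-i},\stp)$ — those that enter $\histSt_{d+1}$ but not agent $i$'s observable history — is most error-prone; I would control it by reusing the marginalization already carried out in the proof of \lref{lemma:posg_bapomdp}.
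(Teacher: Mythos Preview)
Your proposal is correct and follows the same forward-induction skeleton as the paper, but it is noticeably more careful on two points the paper leaves implicit. First, the paper manipulates $\mathcal{D}^{\polSub{i}}_{K}$ directly as if it were a deterministic distribution satisfying the one-step recursion, whereas you cleanly separate the per-simulation law (which does satisfy the recursion exactly) from the empirical average and then close with the weak law of large numbers and a Slutsky argument for the $K_d$ normalization; this is the right way to make the ``$\xrightarrow{p}$'' in the statement honest. Second, the paper's inductive step simply writes $\belSub{i}(\histSt_d|\histSub{i,d})$ inside the recursion for $\mathcal{D}^{\polSub{i}}_{K}$ without justifying that root sampling plus forward propagation actually yields this posterior at depth $d$; your stronger joint invariant on the law of $(\histSub{i,d},\histSt_d)$ supplies exactly that missing link via Bayes' rule. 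What the paper's version buys is brevity: once one accepts both shortcuts, the induction collapses to a single chain of equalities rewriting the POSG kernel as $\bar{\transF}\cdot\bar{\obsF}$, which is all that remains of your argument after the invariant and the LLN are stripped away.
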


\begin{proof}
    By forward induction from $\histSub{i, t}$, where $t$ is the timestep at the root.
    \smallskip
    
    \textit{Base case:} At the root when ($d=t$, $\histSub{i,d} = \histSub{i,t}$), it is clear that $\mathcal{D}^{\polSub{i}}_{K}(\histSub{i, d}) = \hat{\mathcal{D}}^{\polSub{i}}(\histSub{i, d}) = 1$ since all simulations go through the root node.
    \smallskip
    
    \textit{Step case:} Assume $\mathcal{D}^{\polSub{i}}_{k}(\histSub{i, d}) = \hat{\mathcal{D}}^{\polSub{i}}(\histSub{i, d})$ for all time $d$ histories where $t \leq d < T$. Consider any time $d+1$ history $\histSub{i, d+1} = \histSub{i, d}\actSub{i}\obsSub{i}$, the following relation holds:
    
    \begin{equation}
    \begin{aligned}
        \mathcal{D}^{\polSub{i}}_{K}(\histSub{i, d}\actSub{i}\obsSub{i}) 
            &= \mathcal{D}^{\polSub{i}}_{K}(\histSub{i, d})
                \polSub{i}(\actSub{i}|\histSub{i, d}) 
                \sum_{\st \in \stSpace} 
                \sum_{\polSub{-i} \in \Pi_{-i}}     
                \sum_{\hist_{-i, d} \in \histSpace_{i, d}} 
                \belSub{i}(\langle \st, \polSub{-i}, \histSub{-i, d}\rangle | \histSub{i, d}) 
                \sum_{\actSub{-i} \in \actSet{-i}} \polSub{-i}(\actSub{-i} | \hist_{-i, d}) \\
            & \quad \times
                \sum_{\st' \in \stSpace}    
                \transF(\st, \langle \actSub{i}, \actSub{-i} \rangle, \st') 
                \obsFSub{i}(\st', \langle \actSub{i}, \actSub{-i} \rangle, \obsSub{i}) \\
            % Add sum over other agent -i obs
            &= \mathcal{D}^{\polSub{i}}_{K}(\histSub{i, d})
                \polSub{i}(\actSub{i}|\histSub{i, d}) 
                \sum_{\st \in \stSpace} 
                \sum_{\polSub{-i} \in \Pi_{-i}}     
                \sum_{\hist_{-i, d} \in \histSpace_{i, d}} 
                \belSub{i}(\langle \st, \polSub{-i}, \histSub{-i, d}\rangle | \histSub{i, d}) 
                \sum_{\actSub{-i} \in \actSet{-i}} \polSub{-i}(\actSub{-i} | \hist_{-i, d}) \\
            & \quad \times
                \sum_{\st' \in \stSpace}
                \transF(\st, \langle \actSub{i}, \actSub{-i} \rangle, \st') 
                \obsFSub{i}(\st', \langle \actSub{i}, \actSub{-i} \rangle, \obsSub{i}) 
                \sum_{\obsSub{-i} \in \obsSet{-i}}
                \obsFSub{-i}(\st', \langle \actSub{i}, \actSub{-i} \rangle, \obsSub{-i}) \\
            % Substitute in POMDP functions
            &= \hat{\mathcal{D}}^{\polSub{i}}(\histSub{i, d})
                \polSub{i}(\actSub{i}|\histSub{i, d}) 
                \sum_{\st \in \stSpace} 
                \sum_{\polSub{-i} \in \Pi_{-i}}     
                \sum_{\hist_{-i, d} \in \histSpace_{i, d}} 
                \belSub{i}(\langle \st, \polSub{-i}, \histSub{-i, d}\rangle | \histSub{i, d}) \\
            & \quad \times
                \sum_{\st' \in \stSpace}
                \sum_{\actSub{-i} \in \actSet{-i}} 
                \sum_{\obsSub{-i} \in \obsSet{-i}}
                \bar{\transF}(
                    \langle \st, \polSub{-i}, \histSub{-i, d} \rangle, 
                    \act_{i}, 
                    \langle \st', \polSub{-i}, \histSub{-i, d}\actSub{-i}\obsSub{-i} \rangle
                )
                \bar{\obsF}(
                    \langle \st', \polSub{-i}, \histSub{-i, d}\actSub{-i}\obsSub{-i} \rangle, 
                    \act_{i}, 
                    \obs_{i}
                ) \\
            % Substitute in hps functions
            &= \hat{\mathcal{D}}^{\polSub{i}}(\histSub{i, d})
                \polSub{i}(\actSub{i}|\histSub{i, d}) 
                \sum_{\histSt_{d} \in \histStSpace_{d}} 
                \belSub{i}(\histSt_{d} | \histSub{i, d})
                \sum_{\histSt_{d+1} \in \histStSpace}
                \bar{\transF}(\histSt_{d}, \act_{i}, \histSt_{d+1})
                \bar{\obsF}(\histSt_{d+1} \act_{i}, \obs_{i}) \\
            &= \hat{\mathcal{D}}^{\polSub{i}}(\histSub{i, d}\actSub{i}\obsSub{i})
    \end{aligned}
    \end{equation}

    Where the third line is obtained using the induction hypothesis, and the rest from the definitions.

\end{proof}

Now that we have shown that the rollout distributions are equivalent between the POSG and derived POMDP, we can present the main convergence result.

\begin{theorem}
    For all $\epsilon > 0$ (the numerical precision, see \aref{alg:search}), given a suitably chosen c (e.g. $c > \frac{R_{max}}{1-\gamma}$) and prior probabilities $P(\actSub{i}|\histSub{i}) > 0, \forall \histSub{i} \in \histSpace_{i}, \actSub{i} \in \actSet{i}$ (e.g. $\lambda > 0$),
    from history $\histSub{i}$ \algacronym constructs a value function at the root node that converges in probability to an $\epsilon'$-optimal value function, $V(\histSub{i}) \xrightarrow{p} V^{*}_{\epsilon'}(\histSub{i})$, where $\epsilon' = \frac{\epsilon}{1 - \gamma}$. As the number of visits $N(\histSub{i})$ approaches infinity, the bias of $V(\histSub{i})$ is $O(\log{N(\histSub{i})}/N(\histSub{i})).$
\end{theorem}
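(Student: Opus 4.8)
The plan is to reduce the convergence of \algacronym on the typed POSG to the already-known convergence of POMCP on an equivalent POMDP, and then to upgrade that POMCP analysis from UCB to PUCB. The two preceding lemmas do most of the reduction. \lref{lemma:posg_bapomdp} shows that the derived \hpsname POMDP $\bar{\mathcal{M}}$ is value-equivalent to the POSG for every planning-agent policy, so $\valFOpt$ is literally the same object in both models; and \lref{lemma:rollout_dists} shows that the empirical rollout distribution generated by \algacronym's root sampling over states, other-agent policies and histories converges in probability to the rollout distribution obtained by sampling directly from $\bar{\mathcal{M}}$. Taken together, these guarantee that the per-node statistics $N(\histSub{i}\actSub{i})$, $W(\histSub{i}\actSub{i})$ and $Q(\histSub{i}\actSub{i})$ accumulated in \algacronym's search tree converge to the statistics POMCP would accumulate while searching $\bar{\mathcal{M}}$.

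First I would make this correspondence precise at the tree level: because the two rollout distributions agree in the limit, each node $T(\histSub{i})$ in \algacronym behaves asymptotically as the matching node in a POMCP search of $\bar{\mathcal{M}}$, with the particle belief $\hat{\bel}_{i}(\histSub{i})$ playing the role of POMCP's particle posterior. This lets me import the backbone of the analysis in \citet{silverMonteCarloPlanningLarge2010}, which itself rests on the UCT results of \citet{kocsisBanditBasedMontecarlo2006}: truncating the horizon so that discounting leaves at most $\epsilon$ of residual return accounts for the $\epsilon' = \epsilon/(1-\disc)$ slack in the optimality guarantee, and the recursive bandit argument yields the $O(\log N(\histSub{i})/N(\histSub{i}))$ bias at the root.

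The one place where the argument cannot be copied verbatim is the per-node action rule, since \algacronym uses PUCB rather than UCB. The key step is therefore to verify that PUCB can be substituted into the UCT recursion without breaking it. I would invoke \citet{rosinMultiarmedBanditsEpisode2011} (Thm. 2 and Cor. 2): provided every action carries strictly positive prior probability $P(\actSub{i}|\histSub{i}) > 0$, PUCB pulls every arm infinitely often and, for $N(\histSub{i})$ sufficiently large, attains the same asymptotic regret as UCB. In \algacronym this positivity is guaranteed by the $\lambda$-mixing term, whose effective prior $P(\actSub{i}|\histSub{i})(1-\lambda)+\lambda/|\actSet{i}|$ is bounded below by $\lambda/|\actSet{i}| > 0$ even when the meta-policy assigns an action zero mass; a large enough $c$ (e.g. $c > R_{max}/(1-\disc)$) ensures the exploration bonus dominates the value gap so each action is tried before the estimates settle. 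Matching the regret rates lets the concentration bounds propagate up the tree exactly as in the UCB case, after which value equivalence, rollout-distribution equivalence and horizon truncation combine immediately to give $V(\histSub{i}) \xrightarrow{p} V^{*}_{\epsilon'}(\histSub{i})$.

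The main obstacle I anticipate is this last substitution. The UCT convergence proof is delicate: it controls a non-stationary bandit at each internal node whose payoff distribution drifts as the subtree below it is refined, and the induction that carries concentration from the leaves to the root depends quantitatively on the node-level regret rate. The real work is thus not citing Rosin's bound but checking that the PUCB regret guarantee enters the recursion with the right $\log N(\histSub{i})$-order form, and that the slowly varying, belief-averaged prior $P(\actSub{i}|\histSub{i})$ used by \algacronym --- which changes with $N(\histSub{i})$ yet stays bounded away from $0$ --- does not disturb the eventually-UCB-like regime the induction requires. Once that regime is secured, the remaining pieces follow directly from \lref{lemma:posg_bapomdp} and \lref{lemma:rollout_dists}.
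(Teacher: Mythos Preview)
Your proposal is correct and follows essentially the same route as the paper: invoke \lref{lemma:posg_bapomdp} and \lref{lemma:rollout_dists} to reduce \algacronym on the typed POSG to POMCP on the derived POMDP, then import the analysis of \citet{silverMonteCarloPlanningLarge2010} with PUCB swapped for UCB via \citet{rosinMultiarmedBanditsEpisode2011} (Thm.~2 and Cor.~2), using $\lambda>0$ to guarantee the positive-prior hypothesis. If anything, you are more explicit than the paper about the delicate step---checking that PUCB's regret rate and the drifting, belief-averaged prior still support the non-stationary bandit induction in UCT---which the paper's proof simply asserts holds once the regret bounds match.
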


\begin{proof}
    By \ref{lemma:rollout_dists} the \algacronym simulations can be mapped to the PUCT simulations in the derived POMDP. By \ref{lemma:posg_bapomdp} a POSG with a set of stationary policies for the other agent, and a prior over this set is a POMDP, so the analysis from \citet{silverMonteCarloPlanningLarge2010} applies to \algacronym, noting that for $N(\histSub{i})$ sufficiently large PUCB has the same regret bounds as UCB given each action is given prior probability $P(\histSub{i}\actSub{i}) > 0, \forall \histSub{i} \in \histSpace_{i}, \actSub{i} \in \actSet{i}$ (\citet{rosinMultiarmedBanditsEpisode2011}, Thm. 2 and Cor. 2) and so the same analysis of POMCP using UCT applies to POMCP using PUCT. 
\end{proof}

\subsection{Belief Update}

In this section we provide the equations for the initial belief and belief updates using \hpsnames. This is not part of the proof of convergence, but is provided for reference. Conventions differ between AI communities and problem domain regarding whether each episode begins with the agent performing an action $\actSub{i, 0}$ or receiving an initial observation $\obsSub{i, 0}$. So here we show the initial belief for both conventions. Note that in the observation-first setting the initial belief requires an initial observation function $\obsFSub{i, 0}(\obsSub{i}, \st)$ defining the probability agent $i$ received initial observation $\obsSub{i}$ given initial state $\st$. It is always possible to convert between the two conventions. An observation-first model can be converted into an action-first model by including a unique initial state (or state feature) such that all actions in that state have the effect of transitioning to a true initial state and the agent receiving an initial observation. Similarly, an action-first model can be converted to an observation-first model by all agents receiving some unique initial observation (or random initial observation independent of the initial state) at the initial timestep. 

\begin{proposition}(\textbf{Initial Belief})
    Given a POSG $\mathcal{M} = \posgTuple$, where the other agent $-i$ is using a policy from a known set of policies $\Pi_{-i} = \{\polSub{-i, m} | m = 1, \cdots, M\}$, and a prior distribution over them $\rho$, where $\rho(\polSub{-i,m}) = Pr(\polSub{-i, m})$.
    
    (\textbf{Action-first}) If the POSG begins each episode with each agent performing an action, the initial belief for agent $i$ over \hpsnames with initial history $\histSub{i, 0} = \emptyset$ is,
    
    \begin{align}
    \belSub{i, 0}(\histSt | \histSub{i, 0}) &= 
        \begin{cases}
            \belInit(\histSt.\st) \rho(\histSt.\polSub{-i}) & \text{ if } \histSt.\histSub{-i} = \emptyset \\
            0 & \text{ otherwise}. \\
        \end{cases} &&
    \end{align}
    \smallskip
    
    (\textbf{Observation-first)} If the POSG begins each episode with each agent receiving an observation, the initial belief for agent $i$ over \hpsnames with initial history $\histSub{i, 0} = \obsSub{i, 0}$ is,
    
    \begin{align}
    \belSub{i, 0}(\histSt | \histSub{i, 0}) &= 
        \begin{cases}
            \eta \belInit(\histSt.\st) \rho(\histSt.\polSub{-i}) \obsFSub{-i, 0}(\histSt.\obsSub{-i}, \histSt.\st) & \text{ if } \histSt.\histSub{-i} = \obsSub{-i} \\
            0 & \text{ otherwise} \\
        \end{cases} &&
    \end{align}
    \smallskip
    
    Where $\obsFSub{-i, 0}(\obsSub{-i}, \st) = Pr(\obsSub{-i} | \st)$ is the initial observation function for agent $-i$, and $\eta = 1 / Pr(\obsSub{i, 0} | \belInit)$ is a normalizing constant with $Pr(\obsSub{i, 0} | \belSub{0}) = \sum_{\st \in \stSpace} \belInit(\st) \obsFSub{i, 0}(\obsSub{i, 0}, \st)$.
    
    \label{prop:initial_belief}
\end{proposition}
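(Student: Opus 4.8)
The plan is to read both formulas off directly from the definition of the initial belief as a posterior over \hpsnames, treating the two conventions as (i) the prior state distribution of the derived POMDP and (ii) a single Bayesian filtering step applied to that prior. Recall that by definition $\belSub{i,0}(\histSt \mid \histSub{i,0}) = \prob(\histSt_0 = \histSt \mid \histSub{i,0})$ with $\histSt = \langle \st, \polSub{-i}, \histSub{-i} \rangle$. The structural facts I would invoke are that the initial state is drawn from $\belInit$, the other agent's joint policy is drawn from $\rho$ independently of the state, and (in the observation-first case) each agent's initial observation is generated from the state through its own initial observation function --- exactly the independent mechanisms already encoded in the construction of \lref{lemma:posg_bapomdp}.

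For the action-first convention I would argue that $\histSub{i,0} = \emptyset$ carries no information, so the posterior coincides with the prior. The joint prior over a \hpsname factorizes into the state factor $\belInit(\histSt.\st)$ and the policy factor $\rho(\histSt.\polSub{-i})$, while the other agent's history is deterministically empty because no agent has yet acted or observed. This gives $\belSub{i,0}(\histSt \mid \emptyset) = \belInit(\histSt.\st)\,\rho(\histSt.\polSub{-i})$ when $\histSt.\histSub{-i} = \emptyset$ and $0$ otherwise, which is precisely the initial distribution $\bar{\belInit}$ of the derived POMDP in \lref{lemma:posg_bapomdp}.

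For the observation-first convention I would apply Bayes' rule to condition on the planning agent's initial observation $\obsSub{i,0}$. The prior over the full \hpsname now includes the other agent's initial observation as its history, contributing the factor $\obsFSub{-i,0}(\histSt.\obsSub{-i}, \histSt.\st)$, while the likelihood of agent $i$'s own observation given the \hpsname depends on the state alone, contributing $\obsFSub{i,0}(\obsSub{i,0}, \histSt.\st)$. Multiplying the prior by this likelihood and normalizing by the marginal $\prob(\obsSub{i,0} \mid \belInit) = \sum_{\st} \belInit(\st)\,\obsFSub{i,0}(\obsSub{i,0}, \st)$ --- so that $\eta$ is its reciprocal --- yields the stated posterior, with the support restricted to \hpsnames whose other-agent history $\histSt.\histSub{-i}$ is a single initial observation $\obsSub{-i}$.

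The main subtlety to handle carefully is the conditional-independence structure of the two agents' initial observations. I would need to justify that, given the state, $\obsSub{i,0}$ and $\obsSub{-i,0}$ are independent, so that conditioning on $\obsSub{i,0}$ reshapes the belief over $\histSt.\histSub{-i}$ only indirectly, through the revised posterior over $\histSt.\st$, rather than coupling the two observations directly. This is exactly what guarantees that the likelihood collapses to the single factor $\obsFSub{i,0}(\obsSub{i,0}, \histSt.\st)$ and that $\eta$ as defined is the correct normalizer; once this is established, everything else is a routine rearrangement of the defining probabilities.
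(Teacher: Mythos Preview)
The paper does not actually prove this proposition; it is stated without proof ``for reference'' in the belief-update subsection. Your plan for the action-first case is sound and matches the derived-POMDP prior $\bar{\belInit}$ exactly.

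For the observation-first case, however, your Bayesian computation does \emph{not} reproduce the formula as stated. Carrying out the steps you describe --- prior $\belInit(\histSt.\st)\,\rho(\histSt.\polSub{-i})\,\obsFSub{-i,0}(\histSt.\obsSub{-i},\histSt.\st)$ times likelihood $\obsFSub{i,0}(\obsSub{i,0},\histSt.\st)$, divided by $\prob(\obsSub{i,0}\mid\belInit)$ --- gives
\[
\eta\,\belInit(\histSt.\st)\,\rho(\histSt.\polSub{-i})\,\obsFSub{-i,0}(\histSt.\obsSub{-i},\histSt.\st)\,\obsFSub{i,0}(\obsSub{i,0},\histSt.\st),
\]
which carries an extra factor $\obsFSub{i,0}(\obsSub{i,0},\histSt.\st)$ absent from the displayed expression. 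Since $\eta$ is explicitly defined as the constant $1/\prob(\obsSub{i,0}\mid\belInit)$, it cannot absorb this state-dependent term. Your derivation is the correct posterior; the discrepancy appears to be an omission in the paper's stated formula rather than an error in your reasoning. You should flag this rather than claim that your argument ``yields the stated posterior'' --- as written, it does not.
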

\bigskip

\begin{proposition}(\textbf{Belief Update})
    Given a POSG $\mathcal{M} = \posgTuple$, where the other agent $-i$ is using a policy from a known set of policies $\Pi_{-i} = \{\polSub{-i, m} | m = 1, \cdots, M\}$, and a prior distribution over them $\rho$, where $\rho(\polSub{-i,m}) = Pr(\polSub{-i, m})$. The belief for agent $i$ with history $\histSub{i, t} = \histSub{i, t-1}\actSub{i,t-1}\obsSub{i,t}$ after $t$ time-steps is,
    
    \begin{equation}
    \begin{aligned}
        \belSub{i, t}(\histSt_{t} | \histSub{i, t}) &=     
            \beta \histSt_{t}.\polSub{-i}(\histSt_{t}.\actSub{-i, t-1} | \histSt_{t}.\histSub{-i, t-1}) \\
            & \quad \times \obsFSub{i}(\obsSub{i, t}, \langle \actSub{i, t-1}, \histSt_{t}.\actSub{-i, t-1} \rangle, \histSt_{t}.\st) 
            \obsFSub{-i}(\histSt_{t}.\obsSub{-i, t}, \langle \actSub{i, t-1}, \histSt_{t}.\actSub{-i, t-1} \rangle, \histSt_{t}.\st) \\
            & \quad \times \sum_{\st \in \stSpace} \transF(\histSt_{t}.\st, \langle \actSub{i, t-1}, \histSt_{t}.\actSub{-i, t-1} \rangle, \st) 
            \belSub{i, t-1}(\langle \st, \histSt_{t}.\polSub{-i}, \histSt_{t}.\histSub{-i, t-1} \rangle \rangle | \histSub{i, t-1})
    \end{aligned}
    \end{equation}
    \smallskip
    
    Where $\beta$ is a normalizing constant.
    
    \label{prop:belief_update}
\end{proposition}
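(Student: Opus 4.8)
The plan is to recognize the claimed formula as the one-step Bayesian filter for the derived POMDP $\bar{\mathcal{M}}$ of \lref{lemma:posg_bapomdp}, whose hidden state is the \hpsname $\histSt_{t} = \langle \st, \polSub{-i}, \histSub{-i,t} \rangle$. First I would write $\belSub{i, t}(\histSt_{t} | \histSub{i, t}) = Pr(\histSt_{t} \mid \histSub{i, t-1}\actSub{i, t-1}\obsSub{i, t})$ and apply Bayes' rule together with the Markov structure of $\bar{\mathcal{M}}$, which yields the textbook POMDP state-estimator recursion
\begin{equation*}
\belSub{i, t}(\histSt_{t} | \histSub{i, t}) = \beta \, \bar{\obsF}(\histSt_{t}, \actSub{i, t-1}, \obsSub{i, t}) \sum_{\histSt_{t-1}} \bar{\transF}(\histSt_{t-1}, \actSub{i, t-1}, \histSt_{t}) \, \belSub{i, t-1}(\histSt_{t-1} | \histSub{i, t-1}),
\end{equation*}
where $\beta = 1/Pr(\obsSub{i, t} \mid \histSub{i, t-1}\actSub{i, t-1})$ is the normalizing constant. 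The whole proposition then reduces to substituting the definitions of $\bar{\transF}$ and $\bar{\obsF}$ and simplifying.

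Next I would expand the two factors using \lref{lemma:posg_bapomdp}. The observation factor $\bar{\obsF}(\histSt_{t}, \actSub{i, t-1}, \obsSub{i, t}) = \obsFSub{i}(\histSt_{t}.\st, \langle \actSub{i, t-1}, \histSt_{t}.\actSub{-i, t-1} \rangle, \obsSub{i, t})$ supplies the planning agent's own observation likelihood. Expanding $\bar{\transF}(\histSt_{t-1}, \actSub{i, t-1}, \histSt_{t})$ supplies three further factors: the other agent's action probability $\histSt_{t}.\polSub{-i}(\histSt_{t}.\actSub{-i, t-1} | \histSt_{t}.\histSub{-i, t-1})$, the environment transition $\transF$ under the joint action $\langle \actSub{i, t-1}, \histSt_{t}.\actSub{-i, t-1} \rangle$, and the other agent's observation likelihood $\obsFSub{-i}$ --- the last of these appearing because forming the other agent's next history $\histSt_{t}.\histSub{-i, t}$ requires its observation to be drawn inside the hidden-state transition. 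These are exactly the factors appearing in the proposition.

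The step needing care, and the main obstacle, is collapsing the sum over previous \hpsnames $\histSt_{t-1} = \langle \st, \polSub{-i}', \histSub{-i}' \rangle$ down to the single sum over the previous environment state $\st \in \stSpace$ that appears in the statement. Here I would invoke the two Kronecker deltas embedded in $\bar{\transF}$: they force $\polSub{-i}' = \histSt_{t}.\polSub{-i}$ (the type is stationary, hence unchanged) and $\histSub{-i}' = \histSt_{t}.\histSub{-i, t-1}$ (the other agent's previous history must be the current one with its last action--observation pair removed). All other terms in the sum vanish, leaving only the summation over the previous state $\st$, with $\belSub{i, t-1}$ evaluated at $\langle \st, \histSt_{t}.\polSub{-i}, \histSt_{t}.\histSub{-i, t-1} \rangle$. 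Folding the surviving factors together and absorbing the evidence into $\beta$ reproduces the claimed expression. The only real subtlety is bookkeeping --- keeping the time indices on the two histories ($\histSub{-i, t}$ versus $\histSub{-i, t-1}$) and the two observation functions straight through the delta collapse --- rather than anything conceptually deep, since the result is just the \hpsname specialization of the standard recursive POMDP belief update.
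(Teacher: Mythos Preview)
The paper states this proposition without proof; it appears in the appendix explicitly ``for reference'' and is not part of the convergence argument. Your derivation is the natural one: recognize the formula as the standard one-step POMDP belief filter applied to the derived POMDP $\bar{\mathcal{M}}$ of \lref{lemma:posg_bapomdp}, expand $\bar{\transF}$ and $\bar{\obsF}$, and collapse the sum over previous \hpsnames using the constraints built into $\bar{\transF}$. One small remark on the collapse step: the Kronecker delta in $\bar{\transF}$ as written in \lref{lemma:posg_bapomdp} enforces only the policy equality and the time-index increment $t' = t+1$; the history constraint $\histSub{-i}' = \histSt_{t}.\histSub{-i,t-1}$ is not a separate delta but follows from the time constraint together with the fact that a time-$t$ history has a unique time-$(t{-}1)$ prefix. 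This is exactly the bookkeeping you flag as the only subtlety, so the argument goes through as you describe.
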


\section{Environments} \label{sup:envs}

In this section we describe the environment used for experiments in more detail.

\begin{figure*}[ht]
    \centering
    \begin{subfigure}{.25\linewidth}
        \centering
        \includegraphics[width=\linewidth, height=\envfigheight, keepaspectratio]{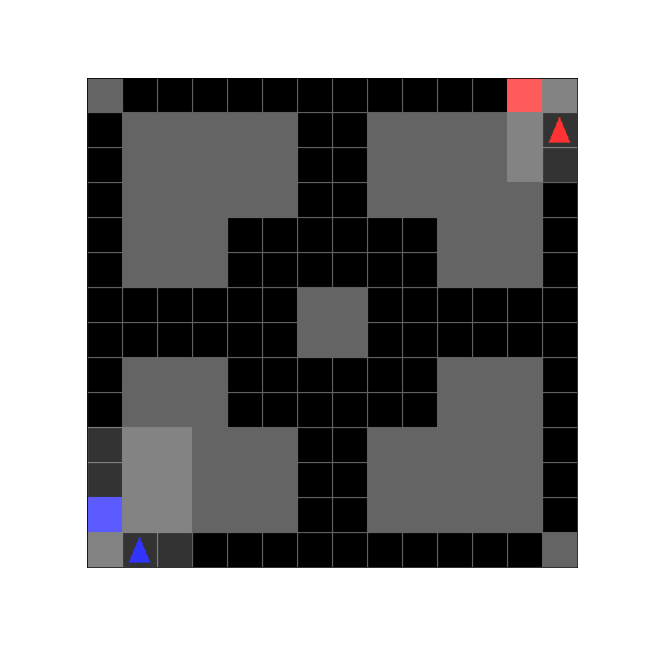}
        \caption{Driving}
    \end{subfigure}%
    \begin{subfigure}{.25\linewidth}
        \centering
        \includegraphics[width=\linewidth, height=\envfigheight, keepaspectratio]{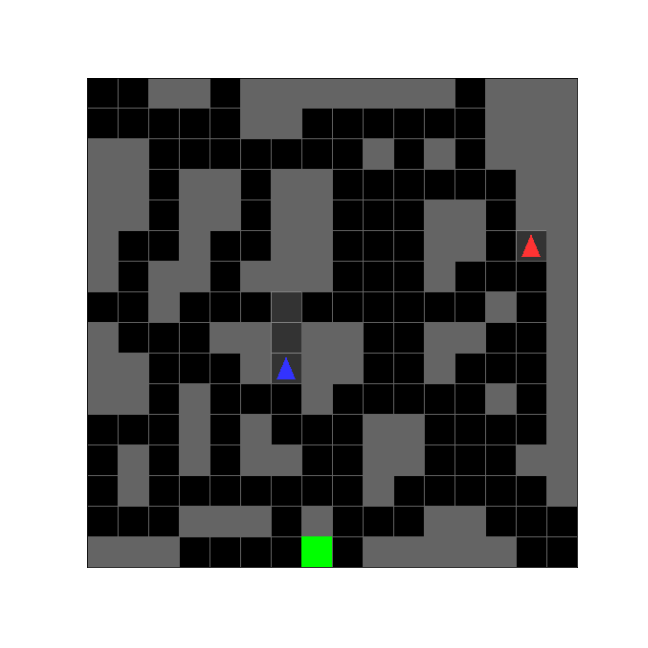}
        \caption{Pursuit-Evasion (PE)}
    \end{subfigure}%
    \begin{subfigure}{.25\linewidth}
        \centering
        \includegraphics[width=\linewidth, height=\envfigheight, keepaspectratio]{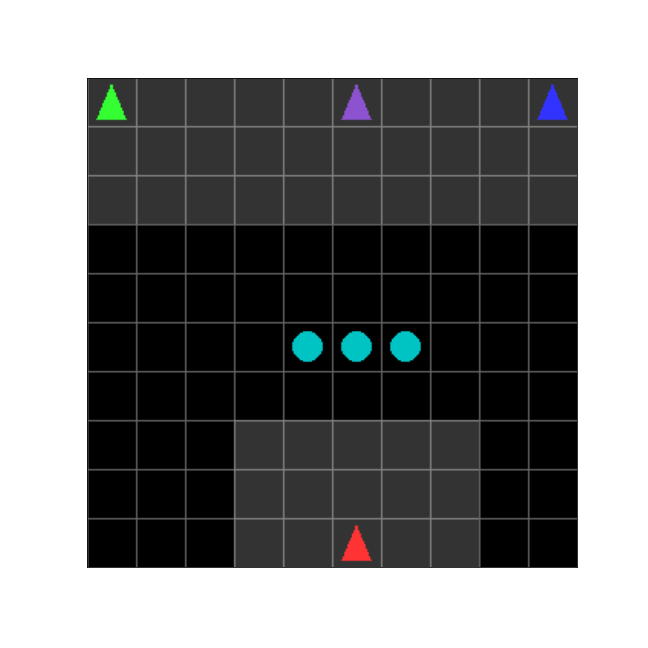}
        \caption{Predator-Prey (PP)}
    \end{subfigure}
    \caption{Experiment Environments}
    \label{supfig:envs}
\end{figure*}

\textbf{Driving:} A general-sum 2D grid world navigation problem requiring coordination \citep{lererLearningExistingSocial2019}. Each agent controls a vehicle and is tasked with driving the vehicle from start to destination locations while avoiding crashing into other vehicles. Each agent observes their local area, speed, and destination, and whether they've reached their destination or crashed. Agents receive a reward of $1$ if they reach their destination, $-1$ if they crash into another vehicle, and $-0.05$ if they attempt to move into a wall. To reduce exploration difficulty agents receive a reward of $0.05$ each time they make progress towards their destination. The exploration bonus is analogous to GPS in that it provides guidance for navigation but not for how to coordinate with other vehicles. Episodes ended when all agents had either crashed into another vehicle or reached their destination, or 50 steps had passed.  

\textbf{Pursuit-Evasion (PE):} An asymmetric zero-sum grid world problem involving two agents, an evader and a pursuer \citep{seamanNestedReasoningAutonomous2018, schwartz2022intmcp}. The evader's goal is to reach a safe location, while the pursuer's aim is to spot the evader before it reaches its goal. The evader is considered caught if it is observed by the pursuer. Both agents have knowledge of each others starting locations, however, only the evader has knowledge of its goal location. The pursuer only knows the set of possible safe locations. Thus, this environment requires each agent to reason about the which path the other agent will take through the dense grid environment. Each agent receives six bits of observation per step. Four bits indicate whether there is a wall or not in each of the cardinal directions, one bit indicates whether the opponent can be seen in the agent's field of vision (which is a cone in front of the agent), and the final bit indicates whether the opponent can be heard within Manhattan distance two of the agent. Due to the lack of precision of these observations, the pursuer never knows the exact position of the evader and vice versa. Similar to the Driving environment the evader agent receives a small bonus whenever it makes progress towards the safe location, while the pursuer receives the opposite reward. Episodes ended when the evader was captured or reached the safe location, or 100 steps had passed.

\textbf{Predator-Prey (PP):} A co-operative grid world problem involving multiple predator agents working together to catch prey \citep{loweMultiagentActorcriticMixed2017}. Prey are controlled autonomously and preference movement away from any observable predators or other prey. Predators can catch prey by being in an adjacent cell, with the number of predators required to catch a prey based on the prey strength. Both predators and prey can observe a 5-by-5 area around themselves, namely whether each cell contains a wall, predator, prey, or is empty. Each prey capture gives all predators a reward of $1/N_{prey}$. Predators start each episode from random separate locations along the edge of the grid, while prey start together in the center of the grid. We ran experiments on two different versions of the environment, where both versions had three prey. The \textit{two-agent} version had two predators with each prey requiring two predators to capture. The \textit{four-agent} version had four predators with prey requiring three predators to capture. Both versions required coordination between agents to capture the prey. Episodes ended when all prey were captured, or 50 steps had passed.

\begin{table}[ht]
    \centering
    \caption{Environment State, Action, Observation Space Sizes. State and Observation Sizes for the Driving and PP Environments are Approximate, but Correct to Within an Order of Magnitude.}
    \begin{tabular}{|c|c|c|c|}
        \hline
        Environment & $|\stSpace|$ & $|\actSet{i}|$ & $|\obsSet{i}|$ \\
        \hline
        Driving & $2.7\times10^{8}$ & $5$ & $3\times10^{6}$ \\
        PE & $2.3\times10^{7}$ & $4$ & $64$ \\
        PP (two-agents) & $7\times10^{10}$ & $5$ & $7\times10^{6}$ \\
        PP (four-agents) & $6\times10^{14}$ & $5$ & $6\times10^{10}$ \\
        \hline
    \end{tabular}
    \label{suptab:env_sizes}
\end{table}

\section{Policies} \label{sup:fixed_policies}

For each environment, we created four to five policies to be used for the set $\Pi$. This set was used for the other agent policies during evaluations and also for the meta-policy $\sigma_{i}$ and policy prior $\rho$. Each policy was represented using a deep neural network with an actor-critic architecture (with policy and value function outputs). In the following sections we supply additional details for each set of policies including multi-agent training schemes, training hyperparameters, and empirical-game payoff matrices. 

\subsection{Training}

Each policy was trained using multi-agent reinforcement learning. We used different multi-agent training schemes for each environment, while the same RL algorithm was used for training each individual policy. Specifically, for each individual policy we used the Rllib \citep{liangRLlibAbstractionsDistributed2018} implementation of the Proximal Policy Optimization (PPO) model-free, policy-gradient method \citep{schulmanProximalPolicyOptimization2017}. We used the same neural network architecture for all policies, namely two fully-connected layers with 64 and 32 units, respectively, followed by a 256 unit LSTM, whose output was fed into two separate fully connected output heads, one for the policy and one for the value function. The neural network architecture and training hyperparameters are shown in \tref{suptab:policy_training_hyperparameters}. All policies were trained until convergence, as indicated by their learning curves.

\begin{table}[H]
    \centering
    \caption{Policy Training Hyperparameters.}
    \begin{tabular}{|c|c|c|c|}
        \hline
        Hyper parameter & Driving & PE & PP \\
        \hline
        Training steps & \multicolumn{3}{|c|}{10,240,000} \\
        Fully Connected Network Layers & \multicolumn{3}{|c|}{[64, 32]} \\
        LSTM Cell Size & \multicolumn{3}{|c|}{256} \\
        Learning Rate & \multicolumn{3}{|c|}{0.0003} \\
        KL Coefficient & \multicolumn{3}{|c|}{0.2} \\
        KL target & \multicolumn{3}{|c|}{0.01} \\
        Batch size & \multicolumn{3}{|c|}{2048} \\
        LSTM training sequence length & \multicolumn{3}{|c|}{20} \\
        Entropy Bonus Cofficient & \multicolumn{3}{|c|}{0.001} \\
        Clip param & \multicolumn{3}{|c|}{0.3} \\
        $\disc$ & 0.99 & 0.99 & $[0.99, 0.999]$ \\
        GAE $\lambda$ & 0.9 & 0.9 & $[0.90, 0.95]$ \\
        SGD Minibatch size & 256 & 256 & $[256, 512]$ \\
        Num. SGD Iterations & 10 & 10 & $[10, 2]$ \\
        \hline
    \end{tabular}
    \label{suptab:policy_training_hyperparameters}
\end{table}

\begin{figure}[H]
    \centering
    \begin{subfigure}[t]{0.23\textwidth}
        \centering
        \includegraphics[width=\textwidth, height=0.2\textheight, keepaspectratio]{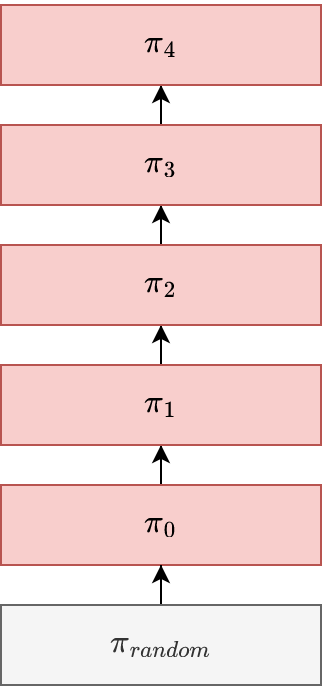}
        \caption{Driving}
    \end{subfigure}
    \begin{subfigure}[t]{0.48\textwidth}
        \centering
        \includegraphics[width=\textwidth, height=0.2\textheight, keepaspectratio]{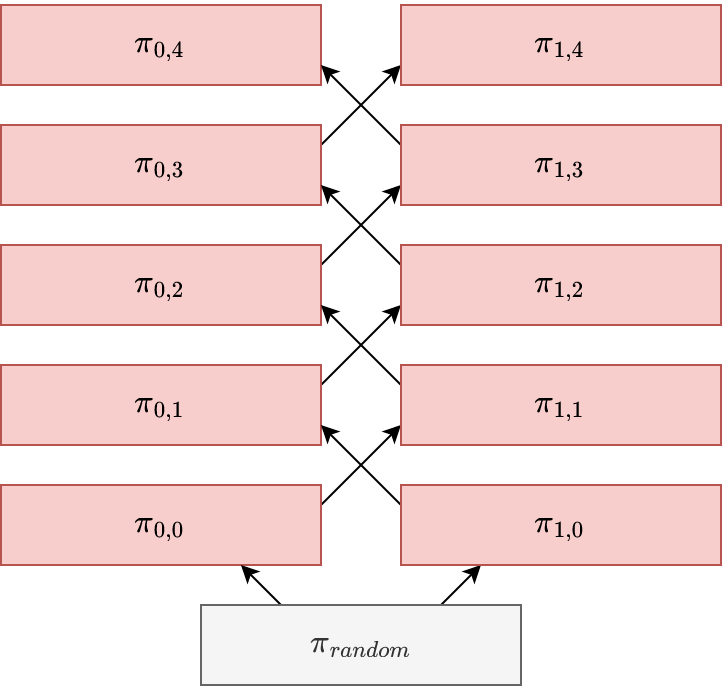}
        \caption{Pursuit-Evasion}
    \end{subfigure}
    \begin{subfigure}[t]{0.23\textwidth}
        \centering
        \includegraphics[width=\textwidth, height=0.2\textheight, keepaspectratio]{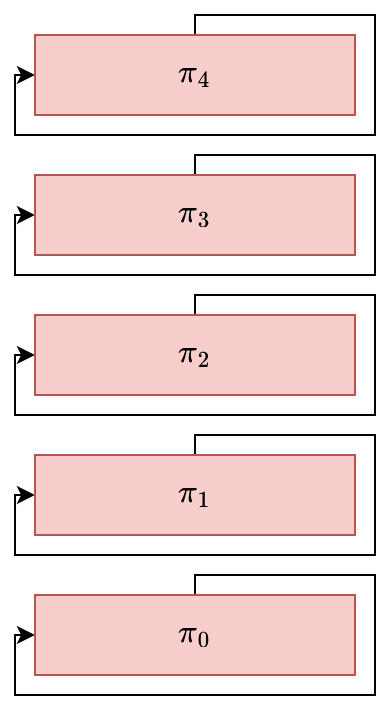}
        \caption{Predator-Prey}
    \end{subfigure}
    \caption{Multi-agent training schemas used for generating the fixed policies for the different environments (adapted from \citep{cuiKlevelReasoningZeroShot2021}). For the Pursuit-Evasion environment separate policies were trained for the \textit{Evader} (agent 0) and the \textit{Pursuer} (agent 1) agents.}
    \label{supfig:training_schemas}
\end{figure}

\begin{figure}[H]
    \centering
    \begin{subfigure}[t]{0.195\textwidth}
        \centering
        \includegraphics[width=\textwidth, height=0.6\textheight, keepaspectratio]{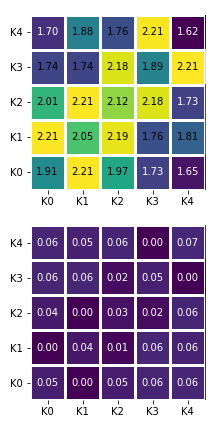}
        \caption{Driving}
    \end{subfigure}
   \begin{subfigure}[t]{0.195\textwidth}
        \centering
        \includegraphics[width=\textwidth, height=0.6\textheight,keepaspectratio]{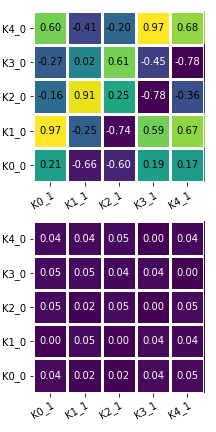}
        \caption{PE (Evader)}
    \end{subfigure}
    \begin{subfigure}[t]{0.195\textwidth}
        \centering
        \includegraphics[width=\textwidth, height=0.6\textheight,keepaspectratio]{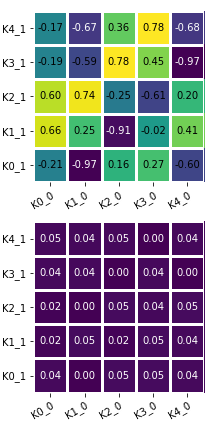}
        \caption{PE (Pursuer)}
    \end{subfigure}
    \begin{subfigure}[t]{0.195\textwidth}
        \centering
        \includegraphics[width=\textwidth, height=0.6\textheight, keepaspectratio]{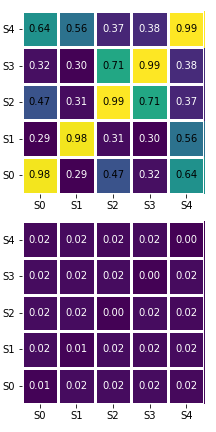}
        \caption{PP (2 agents)}
    \end{subfigure}
    \begin{subfigure}[t]{0.195\textwidth}
        \centering
        \includegraphics[width=\textwidth, height=0.6\textheight, keepaspectratio]{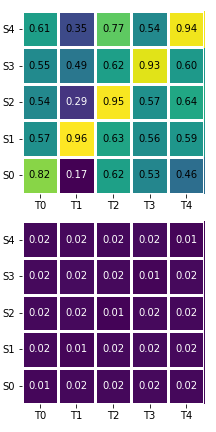}
        \caption{PP (4 agents)}
    \end{subfigure}
    \caption{Empirical-game payoff matrices for the set of policies for each environment. The top row shows the mean payoff after 1000 evaluation episodes, while the bottom row shows the corresponding $95\%$ confidence intervals. Each cell shows the result for the row policy when paired with the column policy or team of policies.}
    \label{supfig:pi_payoffs}
\end{figure}

\subsection{Driving Policies}

For the Driving experiments we trained a set of five $K$-level reasoning (KLR) policies. In the KLR training scheme, policies are trained in a hierarchy, the level $K=0$ policy is trained against a uniform random policy, level $K=1$ is trained against the level $K=0$, and so on with the level $K$ policy trained as a best response to the level $K-1$ policy for $K > 0$. We trained policies synchronously using the Synchronous KLR training method \citep{cuiKlevelReasoningZeroShot2021}. \fref{supfig:training_schemas} provides a visualization of the training schema used. \fref{supfig:pi_payoffs} (a) shows the pairwise performance for the Driving environment policies, with each policy evaluated against each other policy for 1000 episodes.

For the policy prior $\rho$, we used a uniform distribution over the policies with reasoning levels $k = 0, 1, 2, 3$. The meta-policy was defined using all five policies, which included the level $k=4$ policy. This meant the planning agent had access to a best-response for all the policies in $\rho$ and allowed a fair comparison against the Best-Response baseline.

\subsection{Pursuit Evasion Policies}

For the PE experiments we trained a set of five KLR policies, similar to the Driving experiments. The only difference being that we trained separate policies for the Evader and Pursuer at each reasoning level. \fref{supfig:training_schemas} provides a visualization of the training schema used. Separate policies were used because the PE problem is asymmetric, with the pursuer and evader having different objectives. \fref{supfig:pi_payoffs} shows the pairwise performance for the PE environment policies. The meta-policy and prior $\rho$ were defined the same as for the Driving environment.

\subsection{Predator Prey Policies}

For this fully cooperative problem we trained five independent teams of agents using self-play \citep{tesauroTDGammonSelfteachingBackgammon1994} where each team consisted of identical copies of the same policy. The policy architecture and hyperparameters were the same across each of the five teams, except for the initial random seed which was different. Using a different seed meant each team converged to a different solution and lead to a diverse set of policies (as shown by the empirical-game payoffs). We trained separate policies for the two-agent and four-agent versions. \fref{supfig:training_schemas} provides a visualization of the training schema used. \fref{supfig:pi_payoffs} shows the pairwise performance for the set of policies in each version of the environment. For the four-agent version we show the results from matching the row policy with a team of three versions of the same policy (e.g. T0 is three copies of policy S0).

We used a uniform distribution over the five teams for the prior $\rho$, with each team made up of copies of the same policy from set of five trained self-play policies - one copy in the two-agent version, and three copies in the four-agent version. This setup tested the planning agent's \textit{ad-hoc teamwork} ability. The meta-policy was defined using all five policies in both versions.

\section{Evaluation of exploration noise levels} \label{sup:lambda}

\fref{supfig:lambda_all} and \fref{supfig:lambda_std} show the performance of \algacronym using different values for the $\lambda$ hyperparameter.  

\begin{figure}[H]
    \centering
    \includegraphics[width=\linewidth, height=\textheight, keepaspectratio]{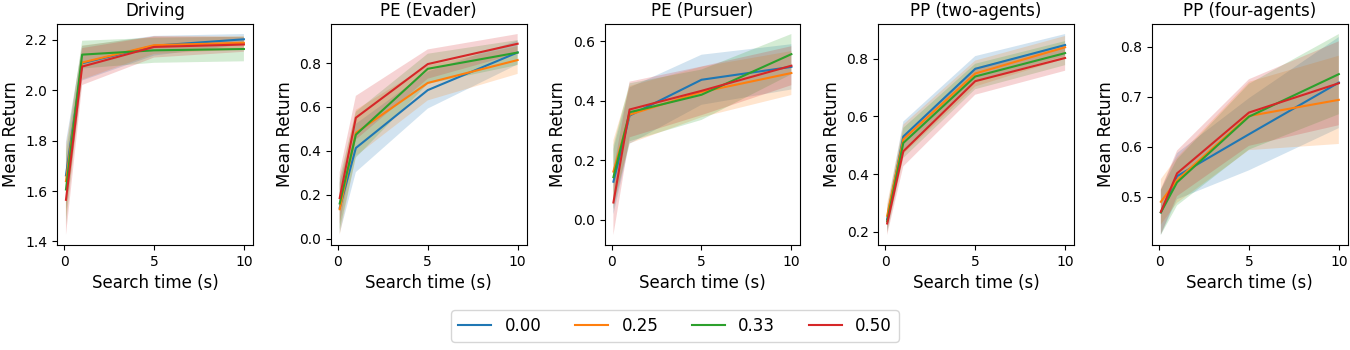}
    \caption{Comparison of \algacronym using different values for $\lambda$ in environment. Each figure shows performance of \algacronym using the $\sigma^{0.25}$ meta-policy. Shaded areas show the $95\%$ confidence interval.}
    \label{supfig:lambda_all}
\end{figure}

\begin{figure}[H]
    \centering
    \includegraphics[width=\linewidth, height=0.2\textheight, keepaspectratio]{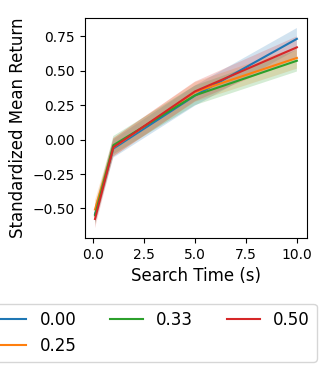}
    \caption{Standardized mean episode returns of \algacronym using different values for $\lambda$. Shaded areas show $95 \%$ CI.}
    \label{supfig:lambda_std}
\end{figure}

\section{Search depth} \label{sup:search_depth}

\fref{supfig:search_depth} shows the maximum search depth by planning time for \algacronym and baseline planning methods.

\begin{figure}[H]
    \centering
    \includegraphics[width=\linewidth, height=0.2\textheight, keepaspectratio]{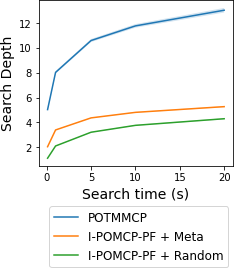}
    \caption{Maximum search depth vs planning time of \algacronym and I-POMCP-PF, averaged across all environments. Shaded areas show the $95\%$ confidence interval.}
    \label{supfig:search_depth}
\end{figure}

\section{Evaluation of different meta-policies} \label{sup:meta_pi}

\fref{supfig:meta_pi_comparison} shows the performance of \algacronym using the different meta-policies in each environment.  

\begin{figure}[H]
    \centering
    \includegraphics[width=\linewidth, height=\textheight, keepaspectratio]{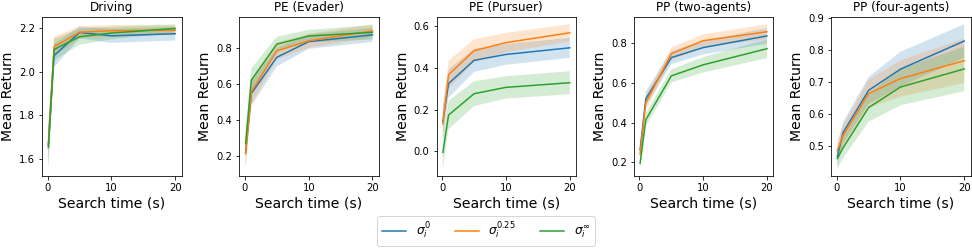}
    \caption{Performance of \algacronym with greedy $\sigma^{0}$, softmax $\sigma^{0.25}$, and uniform $\sigma^{\infty}$ meta-policies in each environment. Shaded areas show the $95\%$ CI.}
    \label{supfig:meta_pi_comparison}
\end{figure}

\section{Evaluation of different search policies} \label{sup:meta_vs_fixed}

\fref{supfig:meta_vs_all_fixed_pi} and \fref{supfig:meta_vs_best_worst_fixed_pi} compares the performance of \algacronym using different search policies.  

\begin{figure}[H]
    \centering
    \includegraphics[width=\linewidth, height=\textheight, keepaspectratio]{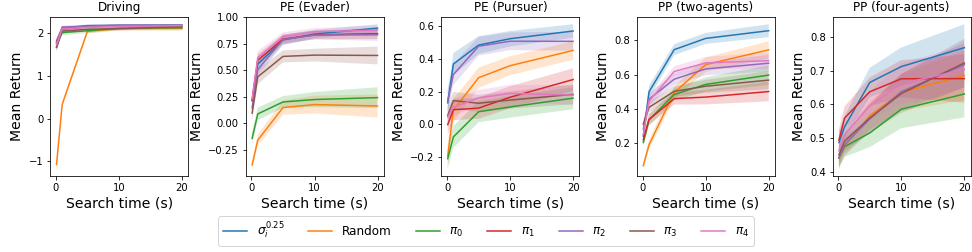}
    \caption{Comparison of \algacronym using different search policies in each environment. Each figure shows performance of \algacronym using the best performing meta-policy, the uniform random policy, and each of the available fixed policies. Shaded areas show the $95\%$ confidence interval.}
    \label{supfig:meta_vs_all_fixed_pi}
\end{figure}

\begin{figure}[H]
    \centering
    \includegraphics[width=\linewidth, height=\textheight, keepaspectratio]{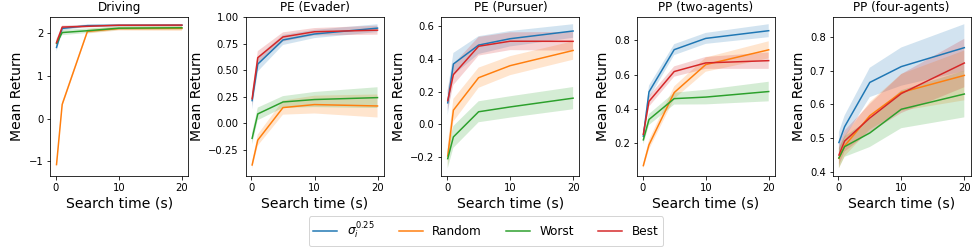}
    \caption{Comparison of \algacronym using different search policies in each environment. Each figure shows performance of \algacronym using the best performing meta-policy, the uniform random policy, and the best and worst of the available fixed policies (based on their performance given the maximum planning time). Shaded areas show the $95\%$ confidence interval.}
    \label{supfig:meta_vs_best_worst_fixed_pi}
\end{figure}

\section{Belief Accuracy} \label{sup:belief_accuracy}

\fref{supfig:bayes_accuracy} shows \algacronym's belief accuracy for each environment.

\begin{figure}[H]
    \centering
    \includegraphics[width=\linewidth, height=\textheight, keepaspectratio]{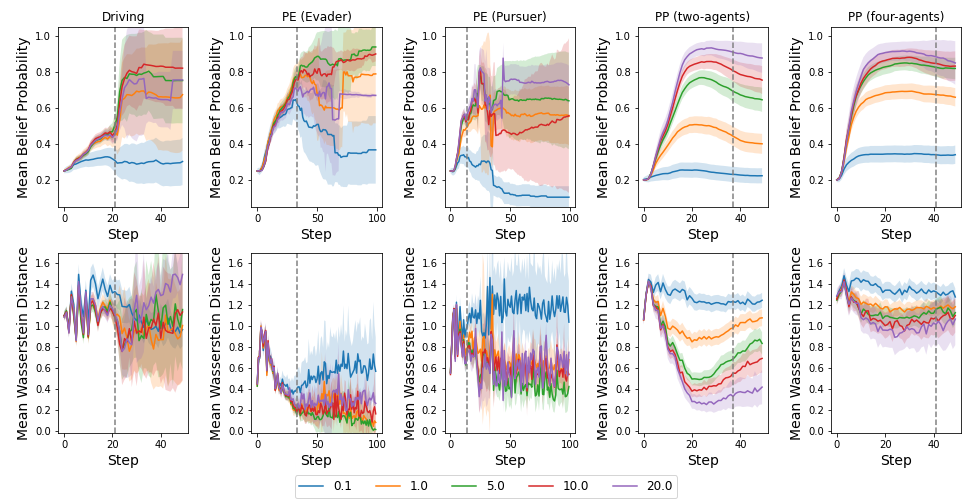}
    \caption{\algacronym's belief accuracy during an episode in each environment. (Top row) Mean probability assigned by \algacronym's belief to the true policy of the other agent. (Bottom row) Mean Wasserstein distance between the belief's estimated action distribution and the other agent's true action distribution. Each line in each figure is \algacronym using a different amount of search time. Shaded areas show $95\%$ confidence intervals. The vertical dashed line shows the mean number of steps taken per episode by \algacronym using 20 s of planning time. In all environments episode lengths were often shorter than the max episode lengths, leading to larger confidence intervals for steps later in the episode.}
    \label{supfig:bayes_accuracy}
\end{figure}

\section{Sensitivity to Novel Policies} \label{sup:sensitivity}

These results were not included in the main paper since they were outside the assumptions of our method. However, we include them here for reference and to motivate future research.

\fref{supfig:sensitivity} show the performance of \algacronym and baselines when paired with other agents using policies from $\hat{\Pi}_{-i}$ that are not included in the set of known policies $\Pi_{-i}$. The policies in $\hat{\Pi}_{-i}$ were generated in the same way as those within the set $\Pi_{-i}$, but with a different seed leading to differences in behaviours. From the results we can see that \algacronym is more robust to the out-of-distribution policies than the I-POMCP-PF baselines, and shown by the higher mean return. However, \algacronym's performance against the new policies $\hat{\Pi}_{-i}$ is significantly lower when compared to its performance against the known $\Pi_{-i}$ policies (\fref{fig:vs_baselines}), suggesting that \algacronym is sensitive to out-of-distribution policies at least for the policy prior used in our experiments. This presents an interesting follow-up question. Specifically, what types of policy priors can lead to more robust performance?

\begin{figure}[H]
    \centering
    \includegraphics[width=\linewidth, height=\resultfigheight, keepaspectratio]{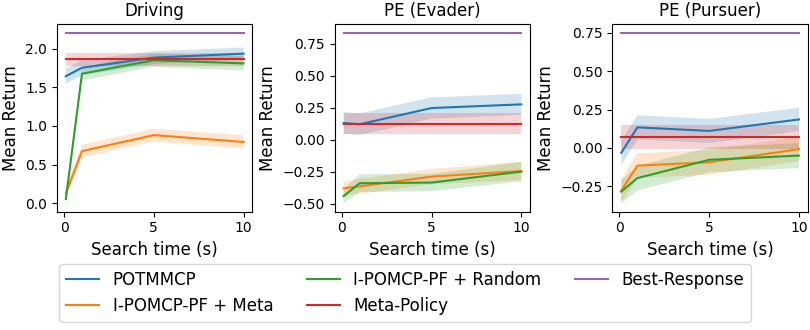}
    \caption{Mean episode return of \algacronym and baseline methods in the Driving and Pursuit-Evasion environments when paired with other agent policies outside of the known set $\Pi_{-i}$. Results are for \algacronym and baselines using the softmax $\sigma_{i}^{0.25}$ meta-policy. Shaded areas show the $95\%$ CI.}
    \label{supfig:sensitivity}
\end{figure}

\end{document}